\def\eqref#1{equation~\ref{#1}}
\def\1{\bm{1}}
\DeclareMathAlphabet{\mathsfit}{\encodingdefault}{\sfdefault}{m}{sl}
\SetMathAlphabet{\mathsfit}{bold}{\encodingdefault}{\sfdefault}{bx}{n}
\newcommand{\E}{\mathbb{E}}
\newtheorem{theorem}{Theorem} 
\newtheorem{definition}{Definition}
\newtheorem{task}{Task}
\setlist{nosep,leftmargin=0.3in} 
\definecolor{light-gray}{gray}{0.9}
\definecolor{darkgreen}{rgb}{0,0.5,0}
\definecolor{darkblue}{rgb}{0.0,0.0,0.65}
\definecolor{darkred}{rgb}{0.55,0.0,0.0}
\renewcommand{\Pr}{\mathbb{P}}
\newcommand{\FE}{\mathsf{F}}
\newcommand{\BE}{\mathsf{B}}
\newcommand{\texthead}{\mathsf{T}}
\let\svthefootnote\thefootnote
\newcommand\freefootnote[1]{%
  \let\thefootnote\relax%
  \footnotetext{#1}%
  \let\thefootnote\svthefootnote%
}
\title{The Belief State Transformer}
\author{Edward S.~Hu$^{1,2}$ \hskip1em  Kwangjun Ahn$^1$ \hskip1em  Qinghua Liu$^1$ \hskip1em  Haoran Xu$^{1,3}$ \hskip1em Manan Tomar$^{1}$
\\\textbf{Ada Langford$^1$ \hskip1em Jayden Teoh$^1$ \hskip1em Bryon Xu$^1$ \hskip1em David Yan$^1$  \hskip1em  Dinesh Jayaraman$^2$}
\\\textbf{Alex Lamb$^1$ \hskip1em John Langford$^1$}
\\$^1$Microsoft Research   \hskip1em   $^2$University of Pennsylvania   \hskip1em $^3$UT Austin \\
}
\begin{document}
\maketitle

\begin{abstract}
We introduce the ``Belief State Transformer'', a next-token predictor that takes both a prefix and suffix as inputs, with a novel objective of predicting both the next token for the prefix and the previous token for the suffix. The Belief State Transformer effectively learns to solve challenging problems that conventional forward-only transformers struggle with, in a domain-independent fashion.  Key to this success is learning a compact belief state that captures all relevant information necessary for accurate predictions.
Empirical ablations show that each component of the model is essential in difficult scenarios where standard Transformers fall short. 
For the task of story writing with known prefixes and suffixes, our approach outperforms the Fill-in-the-Middle method for reaching known goals and demonstrates improved performance even when the goals are unknown.   
Altogether, the Belief State Transformer enables more efficient goal-conditioned decoding, better test-time inference, and high-quality text representations on small scale problems. Website: \href{https://edwhu.github.io/bst-website}{https://edwhu.github.io/bst-website} 
\end{abstract}

\section{Introduction}
 
Transformer models~\citep{vaswani2017attention} have created a revolution in language modeling~\citep{achiam2023gpt} with the capability to generate language with many emergent properties at large scale.  Examining these models for flaws in the pursuit of further progress, it's notable that they struggle with planning-heavy problems~\citep{bubeck2023sparks, momennejad2024evaluating}. How can we modify the architecture, objectives, and algorithms to create a model more capable of reaching goals?

To make progress, we propose the new Belief State Transformer (BST) architecture and objective in \autoref{sec:belief_state}.  Informally, a belief state is a sufficient amount of information from the past to predict the outcomes of all experiments in the future, which can be expressed as either a distribution over underlying world states or a distribution over future outcomes.  The Belief State Transformer is similar to a standard decoder-only Transformer (\emph{e.g.}, GPT2), except that it has encodings that run both forward and backward.  Both of these encodings are fed into output heads which predict not only the next token after the prefix \emph{but also} the previous token before the suffix as shown in \autoref{fig:belief_Transformer}.

In \autoref{sec:stargraph_exposition} we then study in depth how the Belief State Transformer performs on a known-hard problem, the star graph~\citep{bachmann2024pitfalls} which is an elegantly simple sequential prediction problem known to confound next token prediction approaches.  It's easy to show that transformers can represent star graph solutions using known results (\emph{e.g.},~\citep{sanford2024Transformers,frydenlund2024mystery}), so the problem here is one of optimization.  In particular, we discover that parity problems can be embedded within star graph problems, with parity known as difficult for gradient-based optimizers.  Surprisingly, despite throwing away the backward encoder for inference, the BST solves even relatively difficult instances of star graphs  with experiments detailed in \autoref{sec:EoS}.  Analyzing this discovery, the BST benefits from extra gradients, enabling avoidance of the parity-by-gradient problem systematically.  We also show that data augmentation approaches and ablations of the BST cannot solve the star graph problem systematically. 

Building on this discovery, \autoref{sec:BST_Analysis} proves this is a general phenomenon: ideal Belief State Transformers recover the full belief state in a compact representation for the output head.  In contrast, a forward-only transformer and even modifications which predict every future token do not.  This result implies that the Belief State Transformer learns maximal information from a sequence---there is no other objective/representation which pulls more relevant information into a compact belief state.

The Belief State Transformer creates new allowances which we further explore in  \autoref{sec:EBSTA}.  In particular, it's easy to specify a goal token (or tokens) explicitly to generate a goal-conditioned sequence.  We compare this to the fill-in-the-middle approach~\citep{bavarian2022efficient}, finding that the Belief State Transformer succeeds more effectively on the Tinystories dataset. The Belief State approach also enables inference planning because rollouts can occur with the Next head and a semi-independent evaluation of rollout quality can be enabled with the Prev head. Going further, we can use the Belief State Transformer as an embedder, with the relevant embeddings significantly superior to other transformer-based approaches.

Altogether, we show that the Belief State Transformer extracts more information (in theory and in practice) from a set of sequences, enabling Transformer models to perform well in new regimes.  Performance at larger scale is of course an important question for further consideration.

\section{The Belief State Transformer}
\label{sec:belief_state}
\begin{figure}[t]
    \centering
    \includegraphics[clip, trim=1.0cm 10.0cm 1.0cm 1.5cm, width=0.75\textwidth]{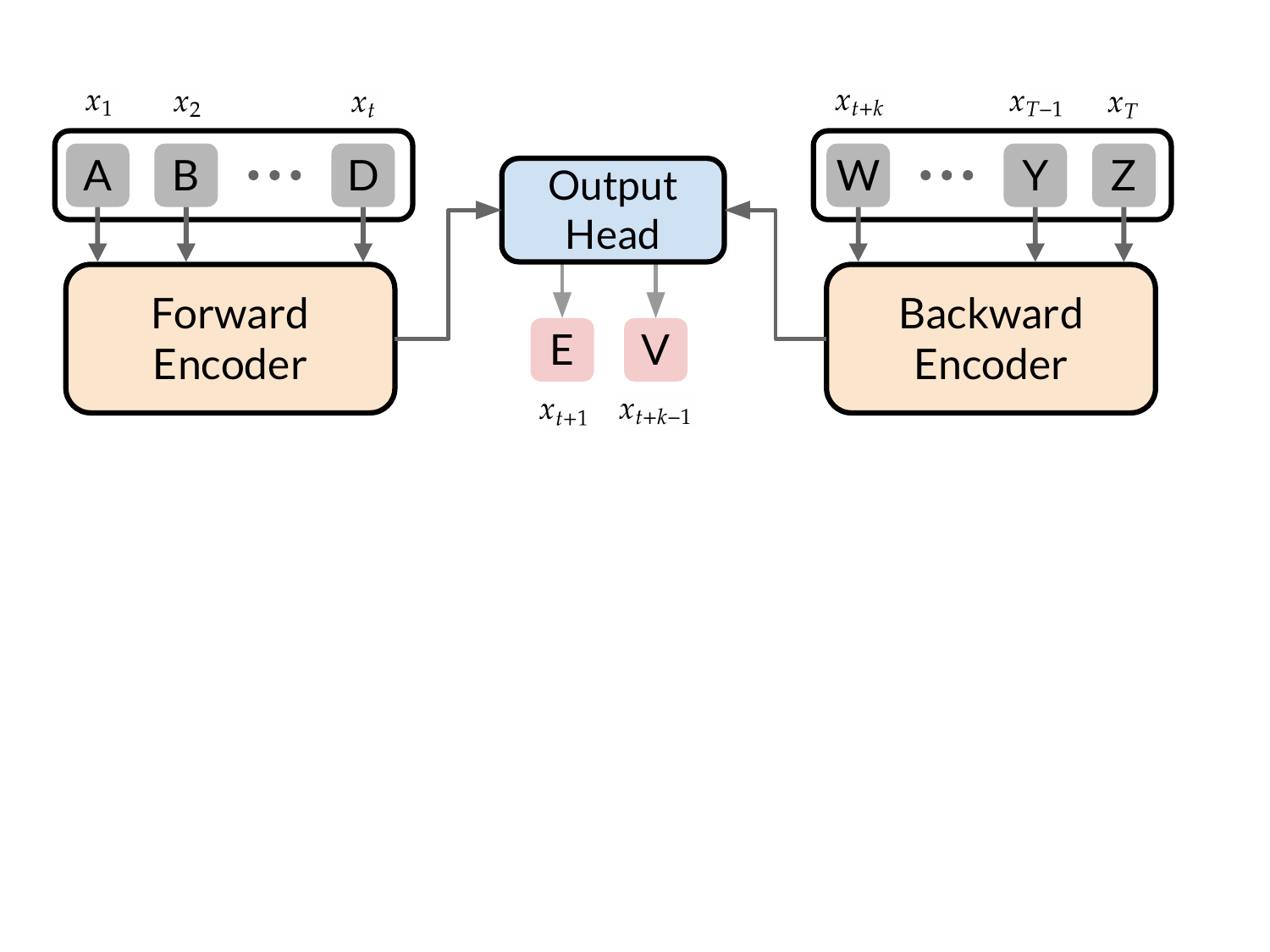}
 \caption{The Belief State Transformer has \emph{two} encoders, one running forward and one backward with an output head for the next forward token and another for the previous backward token.}
    \label{fig:belief_Transformer}
\end{figure}

This section introduces the Belief State Transformer.
We start by introducing the architecture  and the training objective then discuss how to utilize the model for inference. 

\subsection{Architecture and Objective}
 Let $x_{1:T}$ be shorthand for the sequence $x_1, ..., x_T$. First, we set up the following networks:
\begin{equation}
    \label{eq:networks}
    \begin{array}{lll}
        \text{Forward encoder} & \FE(x_{1:t}) & \color{gray}{\vartriangleright\text{Encodes prefix}}\\
        \text{Backward encoder} & \BE(x_{t+k:T}) & \color{gray}{\vartriangleright\text{Encodes suffix}}\\
        \text{Next decoder} & \hat{x}_{t+1} \sim \texthead_n(\cdot \mid \FE(x_{1:t}), \BE(x_{t+k:T})) & \color{gray}{\vartriangleright\text{Predicts next token}}\\
        \text{Prev decoder} & \hat{x}'_{t+k-1} \sim \texthead_p(\cdot \mid \FE(x_{1:t}), \BE(x_{t+k:T})) & \color{gray}{\vartriangleright\text{Predicts previous token}}\\
    \end{array}
\end{equation}
The forward encoder aggregates the prefix into a  latent $\FE(x_{1:t})$, and the backward encoder aggregates the suffix into a latent $\BE(x_{t+k:T})$.  We use GPT2-style encoders throughout our experiments, including baselines.  The output heads $\texthead_n$ and $\texthead_p$ then predict their respective tokens.  In our experiments, the parameters of $\texthead_n$ and $\texthead_p$ are tied with only the last layer differing.
See \autoref{fig:belief_Transformer} for an illustration.

The Belief State Transformer objective is the straightforward sum of the objectives of forward and backward Transformers conditioned on the prefix and suffix.
\begin{align}
\label{eq:training_objective}
\E_{t,x_{1:t},k\leq T-t}  \left[\log\frac{1}{\texthead_n( x_{t+1} \mid \FE(x_{1:t}), \BE(x_{t+k:T}))} +  \log \frac{1}{\texthead_p( x_{t+k-1} \mid \FE(x_{1:t}), \BE(x_{t+k:T})} \right]  
\end{align}
An obvious alternative (called ``Fill in the Middle"~\citep{bavarian2022efficient}) when both a prefix and suffix are available is simply putting them together and then using a forward encoder.  Information-theoretically, Fill in the Middle works, but we'll see that the Belief State Transformer has several advantages: it causes the system to coalesce a compact belief state which has many benefits explored here.  This approach also extracts $O(T^2)$ gradients from sequences enabling a gradient based optimizer to solve new problems as discussed in the next section.  See \autoref{app:pseudocode} for code and scaling rules.

Training on all prefix-suffix pairs is surprisingly efficient. First, we cache all forward $f_{0:T}=\{\forall i \in [0:T]: \FE(x_{1:i})\}$ and backwards $b_{1:T+1}=\{\forall i\in [1,T+1]: \BE(x_{i:T})\}$ latents.  Then the loss \cref{eq:training_objective} is computed over training examples $(f_i, b_j)$ and their  labels $(x_{i+1}, x_{j-1})$ for valid $i,j$ with $j-i>1$. We first compute the gradients of the output decoder, and then add them up to compute the gradients of the encoders.  Since there are $O(T)$ gradients over the output head per position of the forward or backward encoders this optimization saves a large amount of memory and compute.

\subsection{Belief State Inference}
\label{sec:Belief_State_Inference}
During inference time, the forward model $\texthead_n(\FE(x_{1:t}),\BE(\emptyset))$ is given a prefix $x_{1:t}$ and an empty suffix $\emptyset$. Autoregressive sampling (ARS) is always used, where we sample the next token $\hat{x}$ from the next token decoder, add it to the prefix, and repeat.   Note that since $\BE(\emptyset)$ can be precomputed, this approach requires no more parameters at inference time than a standard forward-only Transformer.  Later in \autoref{sec:EBSTA}, we study more complex inference approaches.

\section{Testing Planning Abilities with Star Graphs}
\label{sec:stargraph_exposition}
\begin{figure}[ht]
    \centering
    \includegraphics[width=\textwidth]{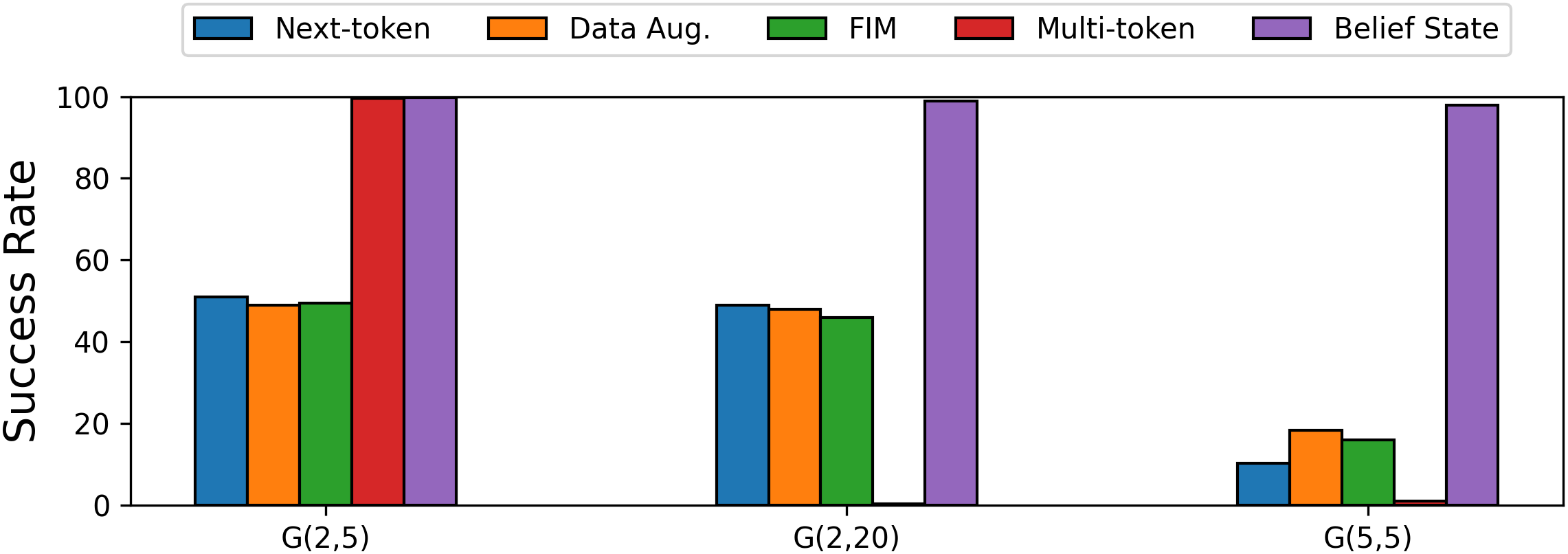}
 \caption{The Belief State Transformer outperforms baselines in all star graph navigation tasks.}
    \label{fig:stargraph_results}
\end{figure}

\cite{bachmann2024pitfalls} propose the star graph problem as an elegantly simple yet challenging task for forward Transformers to solve.  In \autoref{fig:stargraph_results}, we reproduce their results while adding a data augmentation baseline and the new Belief State Transformer results.  Notably, the Belief State Transformer performs exceptionally well without relying on domain-specific adaptations.  
In the following sections, we explain the star graph problem, present a new theory to account for these results, provide a detailed discussion of our experiments, and ablate key design choices. 

\subsection{The Star Graph Problem}

\begin{wrapfigure}[12]{r}{0.35\textwidth}
    \vspace{-0.2in}
    \centering
    \includegraphics[width=0.3\textwidth]{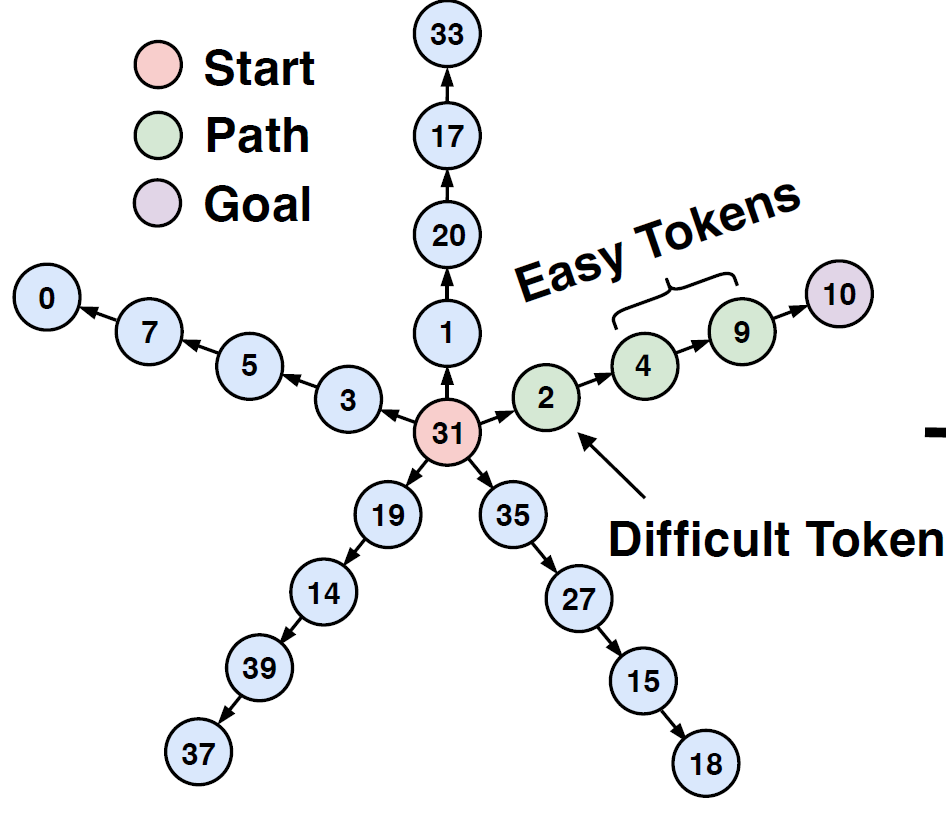}  
    \caption{Illustration of the star graph problem from  \cite{bachmann2024pitfalls}.}
    \label{fig:stargraph}
\end{wrapfigure}

A star graph (depicted in  \autoref{fig:stargraph})  $G(d,l)$ is a graph with $d$ paths of length $l$ emanating out from the start node. To construct a graph, nodes $n_i$ are sampled uniformly from $\{ 1, \ldots, N \}$. A training example is formatted as a sequence containing the edge list $\mathcal{E}$, the start and end nodes, and a path of length $l$ from start to end: $\left[ \mathcal{E} \ | \ n_1, n_l\ | \ n_1, n_2, n_3, \ldots n_l\right]$.  Despite its simplicity, modern next token prediction models fail to solve it.

This task captures a core challenge found in practical planning tasks like story writing, where creating a coherent narrative requires the author to keep the story's resolution and backstory in mind while progressing through each plot point.

\subsection{Why do forward-only approaches fail?}
As shown by \cite[Appendix F.2]{bachmann2024pitfalls} through extensive experiments, next-token predictors quickly learn a ``\emph{flawed cheat}'' strategy: soon after training begins, forward-only transformers learn to arbitrarily select a neighbor of the current node since, aside from the start node, each node has only one outgoing edge. This flawed strategy leads the model to choose a neighbor of the start node without accounting for the designated goal during inference, limiting the test accuracy to $1/d$, where $d$ is the number of neighbors of the start node. 
The key issue with this flawed cheat is that once the model learns it, finding the correct solution becomes exceedingly difficult.  To better understand this challenge, we provide formal evidence of its difficulty for gradient optimization. 

\begin{theorem}[Informal] \label{thm:hard}
Once the ``flawed cheat'' strategy is perfected, learning the correct path is at least as difficult as learning full parity functions. 
\end{theorem}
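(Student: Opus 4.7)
The plan is to reduce learning full parity to improving beyond the ``flawed cheat'' in the star graph setting, and then invoke the well-known hardness of parity under gradient-based (equivalently, statistical-query) oracles. The reduction has two halves: showing that the problem of beating the cheat contains a parity instance, and showing that, starting from the cheat predictor, the usable gradient signal for that instance decays at the same rate as in the standard parity lower bounds.

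First I would formalize the post-cheat state as a predictor $\pi_{\text{cheat}}$ whose distribution at position $n_2$ depends only on the neighbors of $n_1$, not on the designated goal $n_l$, and which places approximately uniform mass on those $d$ neighbors so as to realize the $1/d$ accuracy described in \cref{sec:stargraph_exposition}. Any improvement over this baseline must implement a function $\phi(\mathcal{E}, n_1, n_l)$ that selects the unique branch of the edge list containing $n_l$.

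Second, I would embed a parity instance into this residual problem. Take $d=2$, encode $n$ parity bits into designated low-order bits of the node tokens inside $\mathcal{E}$, and for a target set $S\subseteq[n]$ sample the star graph so that the correct branch is $0$ exactly when $\chi_S(z)=\bigoplus_{i\in S} z_i$ evaluates to $0$ on the encoded string $z\in\{0,1\}^n$. Conditional on $\chi_S(z)$ the two branches are exchangeable, so beating $1/d$ is information-theoretically equivalent to computing $\chi_S$. I would then apply a Shamir-style gradient-concentration argument: the family $\{\chi_S : |S|=n\}$ is pairwise orthogonal in $L^2(\{0,1\}^n)$, so for a uniformly random $S$ and any bounded-variance parameterized model, the expected squared gradient of the training loss at $\pi_{\text{cheat}}$ is bounded by $2^{-\Omega(n)}$. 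This matches the conjectured complexity of learning full parity and delivers the theorem.

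The main obstacle is the embedding step. The encoding must be chosen so that no low-depth attention pattern can read the parity bits as a short, positionally fixed subsequence; otherwise a learner could short-circuit the parity rather than confront it. I would scatter the parity bits at positions determined by a random permutation that is itself part of the input, forcing any attention head that reads them to first solve a nontrivial lookup, which prevents the hardness from collapsing on transformer-specific shortcuts. A secondary technical point is translating the standard squared-loss parity lower bound to the cross-entropy objective of \cref{eq:training_objective}; this is routine because at the uniform-output predictor $\pi_{\text{cheat}}$ the cross-entropy gradient is linear in the one-hot target, which preserves the Fourier orthogonality that the lower bound exploits.
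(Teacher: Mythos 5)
Your first step (post-cheat supervision collapses to predicting the correct neighbor of the start node, i.e.\ \autoref{task:first-vertex-pred}) matches the paper, and your instinct to embed parity into that residual task is also the paper's. But the way you execute the embedding, and what you then try to prove from it, both have genuine problems. You encode the parity bits $z$ into node labels and then ``sample the star graph so that the correct branch matches $\chi_S(z)$,'' claiming that conditional on $\chi_S(z)$ the two branches are exchangeable and hence that beating $1/d$ is \emph{information-theoretically} equivalent to computing $\chi_S$. That claim is false: the model's input contains the full edge list $\mathcal{E}$ and the goal node, and the correct branch is always recoverable by tracing connectivity from the goal back to the center, with no parity computation at all. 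So there is no information-theoretic equivalence to hide behind; any hardness statement here can only be a \emph{reduction} (an algorithm that learns the task on the constructed distribution yields an algorithm computing parity), which is exactly how the paper frames its formal version (\autoref{thm:hard_formal}). The paper's construction also differs in a way that makes the reduction clean: the parity bits are wired into the \emph{edge structure} itself (bit $x[i]$ decides whether arm positions $\pm i$ connect straight or crossed), so the instance is an efficient encoding of $x$ and the first step toward the goal equals $\mathrm{parity}(x)$ by construction; hardness of full parity is then cited as empirical/conjectural, not re-proved.

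Your attempt to go further and prove a $2^{-\Omega(n)}$ bound on the expected squared gradient at $\pi_{\text{cheat}}$ does not go through as sketched. The Shamir-style variance argument requires a fixed input distribution over which the family $\{\chi_S\}$ is orthogonal while only the \emph{target} varies with $S$; in your construction the \emph{inputs} (the sampled graphs) are correlated with $S$, because the graph is resampled to be consistent with $\chi_S(z)$, so the orthogonality that the bound exploits is destroyed and gradients can legitimately pick up connectivity-based signal (indeed they must, since other training signals and architectures do solve star graphs). There is also a mismatch between ``full parity'' (a single fixed function, whose gradient-hardness is a conjecture supported empirically, as the paper notes) and the uniformly-random-$S$ family needed for an SQ/variance lower bound. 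The random-permutation scattering of bits and the squared-loss-to-cross-entropy translation are refinements of a step that is not needed for the theorem as stated and would not be salvageable in this form; the statement you actually need, and the one the paper proves, is just the many-one reduction from parity instances to star-graph instances.
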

A full proof is provided in \autoref{app:proof}.  The full parity problem is a notoriously hard problem for gradient-based optimizers \citep[{e.g.,}][]{shalev2017failures, abbe2020poly}.  In fact, it is conjectured that learning the full parity function requires exponentially many samples  and computations in the input dimension \citep[\emph{e.g.},][]{abbe2022non}. 
 
The analysis in this section indicates that once the flawed cheat is perfected, the limited supervision available for the task leads to significant intractability. 

\subsection{How does the Belief State Transformer Succeed?}

The Belief State Transformer is trained to predict the previous token for every suffix, which prevents the problem from collapsing into the parity problem described in \autoref{thm:hard}.
At a high level, for our approach to reduce to the parity problem, a large number of gradients must approach zero. However, the Belief State Transformer ensures that the information necessary to construct path suffixes (e.g., $n_{2:l}$) is present in the input to the output head.
From this information, predicting $n_{2}$ is straightforward, allowing the model to solve the problem effectively.

This is an instance of a more general phenomenon: the Belief State Transformer naturally converges toward extracting a compact, complete belief state as discussed further in \autoref{sec:belief_discovery}.

\subsection{Experimental Results on Star Graph}
\label{sec:EoS}
Here, we provide details on the Stargraph results in  \autoref{fig:stargraph_results}.
We run experiments on three types of graphs: $G({2, 5})$, $G({5,5})$, $G({2,20})$. In each experiment, we choose one graph topology and generate many example graph sequences, with all methods receiving the same amount of data and every baseline receiving at least as much computation as the Belief State Transformer uses.  For evaluation, the models are conditioned on the edge list, start, goal, and current path, with the task of next node prediction: $p(n_i \mid \mathcal{E}, n_1, n_l, n_{1:i-1})$. We report the path accuracy, which is the percentage of correct path generations during the test time, over 10,000 evaluation graphs.

\paragraph{Empty suffix at inference.} The Belief State Transformer trains an additional encoder $\BE$ for suffixes. However, as discussed in \autoref{sec:Belief_State_Inference}, we remove the dependency on $\BE$ during inference time by pre-computing the backward latent $b_{\emptyset} = \BE(\emptyset)$ with the end token and proceed with auto-regressive sampling of the Next decoder $\texthead_n(\cdot \mid \FE(\mathcal{E}, n_1, n_T, n_1, \ldots n_i), b_{\emptyset})$. The model is still able to produce goal-conditioned behavior since the goal node $n_T$ is present in the prefix input to the forward encoder.

\paragraph{Baselines.} We select baselines that can be applied to a broad class of problems.

\begin{itemize}[leftmargin=*,noitemsep]
    \item \textbf{Forward-only next-token prediction:}  This baseline follows the conventional strategy of training a Transformer with the next token prediction objective and teacher forcing, \emph{i.e.}, the model is trained by feeding the correct previous token as input.
    
    \item \textbf{Data augmentation:} A common strategy to improve performance is to employ some form of data augmentation, although this requires some domain expertise to perform \citep{lee2024teaching}. This baseline augments the training data by replacing the goal with subgoals to potentially improve the learning of goal-conditioned behaviors. Specifically, the goal node, usually the terminal node in the path, is replaced with an intermediate node in the path. Then, a Transformer is trained with the next-token objective on this augmented dataset. 
    \item \textbf{Fill-in-the-middle: } The FIM approach \citep{bavarian2022efficient} moves a span of text from the middle to the end, and then trains the transformer with next token prediction. This can be seen as a generalization of the data augmentation approach, where subsequences of intermediate nodes are used in lieu of a single intermediate node.
    \item \textbf{Multi-token Prediction: } The multi-token approach refers to a Transformer trained to predict the entire path $\prod_{i=1}^T p(n_i \mid \mathcal{E}, n_1, n_T)$ in one forward pass without providing access to tokens from a partial path (hence the name ``Teacherless" in \cite{bachmann2024pitfalls}).
\end{itemize} 

Our code and baselines adopt the GPT2 Transformer architecture, using standard hyperparameter settings for number of layers, embedding dimension, etc. All models are trained on the same dataset with the same training budget. See \cref{supp:stargraph_details} for additional training information and model setup. 

\subsection{Results}
\label{sec:stargraph_results}
As seen in \cref{fig:stargraph_results}, the Belief State Transformer successfully learns to solve all the graphs.  Since the Belief State Transformer uses an empty suffix, the parameter counts at inference time are very similar with minor variations driven by variations in output heads.  

The baselines have varying degrees of success. The forward only baseline achieves at most a $1/d$ success rate, as it learns to output valid paths at random during inference time. Similarly, the data augmentation and FIM baselines perform poorly.  We suspect that despite access to additional information like  intermediate sub-goals or paths, the gradients which the model is exposed to are still inadequate to encourage the development of appropriate representations due to the easy availability of shortcut solutions.

The multi-token baseline, while successful in some smaller graphs like $G(2,5)$, fails to solve more complicated graphs with more arms ($G(5,5)$) or with longer path lengths ($G(2, 20)$), reproducing prior results of~\cite{bachmann2024pitfalls}.

\subsection{Belief State Transformer Variants}
Next, we ablate the Belief State Transformer to characterize its performance. The \textbf{Belief w/o Prev} ablation removes the previous  token decoder and its objective from the training. The \textbf{Belief w/o  Backward} ablation removes the backward encoder $\BE$ from the training and inference process.

\begin{figure}[ht]
    \centering
    \includegraphics[width=\textwidth]{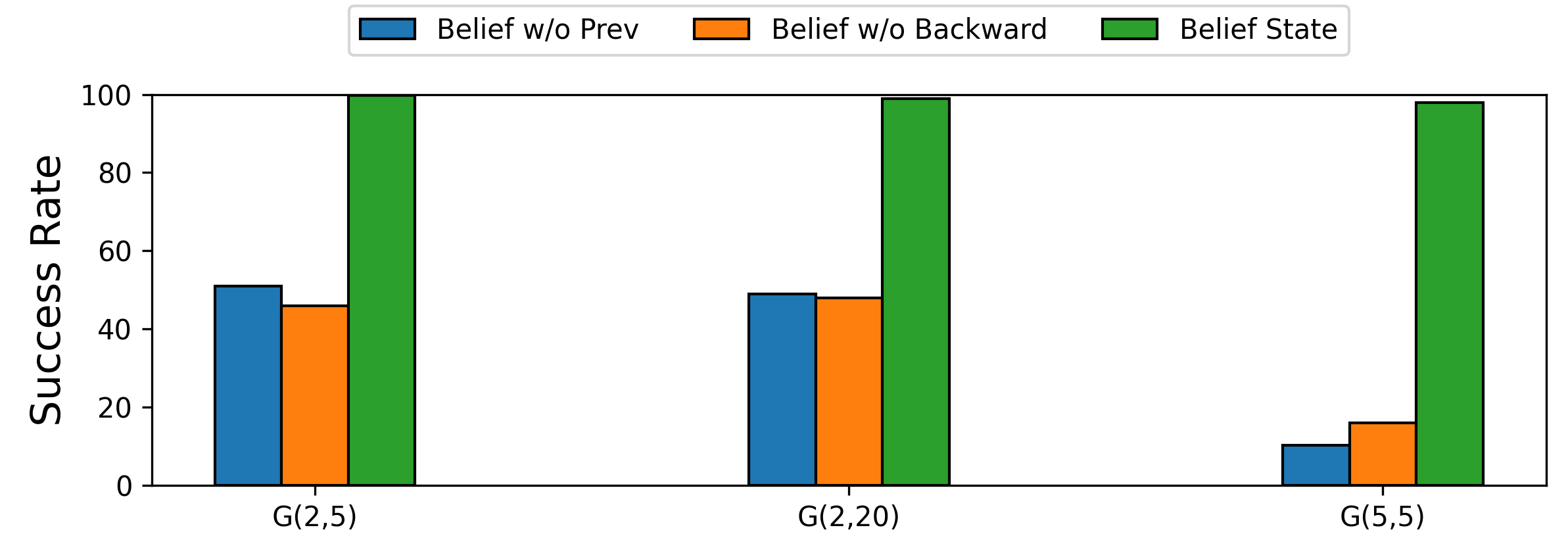}
 \caption{Ablations of the Belief State Transformer on the star graph. Both the belief state objective and the backward encoder are crucial.}
    \label{fig:stargraph_variants}
\end{figure}

The results in \cref{fig:stargraph_variants} show that both the belief state objective and backward encoder are important components of the Belief State Transformer, and removing either drops performance back down to randomly guessing amongst valid paths. Prediction of the Prev token forces the Transformer to learn to represent long term dependencies which ends up being useful for goal-conditioned navigation. 

\section{Belief State Transformer Analysis}
\label{sec:BST_Analysis}
In this section, we show that the Belief State Transformer discovers a compact belief state, and that the forward-only and multi-token approaches do not\footnote{The fact that the forward-only approach does not produce a compact belief state is well known. We formalize it here to contrast with the Belief State Transformer.}.  First, we formally define a belief state.
\begin{definition}[Belief State] \label{def:belief}
For any probability distribution over a set of sequences $P(x_{1:T})$, for any partial sequence $s=x_{1:t}$, a vector $v_s$ is a belief state for $s$ if there exists a randomized function $g(v_s)$ which can sample from the conditional distribution $\Pr(x_{t+1:T}|x_{1:t})$.
\end{definition}
By definition, a belief state captures all available information relevant for predicting the future tokens. Once the belief state is learned, there is no additional useful information to be gained---everything necessary for future predictions is already encoded within it.

\subsection{Belief State Discovery}
\label{sec:belief_discovery}

We first show that successfully optimizing a Belief State Transformer results in a compact belief state.

\begin{theorem} \label{thm:belief_discovery}
Let $D = P(x_{1:T})$ represent any given probability distribution over a set of sequences. Consider an ideal Belief State Transformer that satisfies the following conditions for all prefixes $x_{1:t}$ and suffixes $x_{t+k+1:T}$:
\begin{align}
    \texthead_n(x_{t+1} \ | \ \FE(x_{1:t}),\BE(x_{t+k+1:T}))  &= \Pr(x_{t+1} \ | \ x_{1:t}, x_{t+k+1:T}), \\
    \texthead_p (x_{t+k} \ |  \ \FE(x_{1:t}),\BE(x_{t+k+1:T})) &= \Pr(x_{t+k} \ | \ x_{1:t}, x_{t+k+1:T}).
\end{align}
Then, for any partial sequence $x_{1:t}$ supported by $D$, the forward encoding of the ideal Belief State Transformer, $\FE(x_{1:t})$, is a belief state for $x_{1:t}$.
\end{theorem}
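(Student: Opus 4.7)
The plan is to construct explicitly a randomized function $g$ witnessing that $\FE(x_{1:t})$ is a belief state in the sense of \autoref{def:belief}. The key idea is to decompose the future conditional in \emph{reverse} order via the chain rule,
\begin{equation*}
\Pr(x_{t+1:T} \mid x_{1:t}) \;=\; \prod_{j=T}^{t+1} \Pr(x_j \mid x_{1:t},\, x_{j+1:T}),
\end{equation*}
so that every factor is precisely a quantity the Prev head is assumed to output correctly.

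The argument has three steps. First, applying the optimality hypothesis on $\texthead_p$ with $k = j-t$ for each $t+1 \le j \le T$ (using the convention $x_{T+1:T} = \emptyset$ and $\BE(\emptyset)$ for its encoding) gives
\begin{equation*}
\texthead_p\bigl(x_j \mid \FE(x_{1:t}),\, \BE(x_{j+1:T})\bigr) \;=\; \Pr\bigl(x_j \mid x_{1:t},\, x_{j+1:T}\bigr).
\end{equation*}
Second, I would define $g(v)$ to be the following procedure: iterating $j$ from $T$ down to $t+1$, sample $x_j \sim \texthead_p(\cdot \mid v,\, \BE(x_{j+1:T}))$ using the tokens $x_{j+1:T}$ already drawn in prior iterations, and finally output $(x_{t+1},\ldots,x_T)$. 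Third, applied to $v = \FE(x_{1:t})$, this draws $x_{t+1:T}$ with joint law equal to the product above, which by the reverse chain rule is exactly $\Pr(x_{t+1:T} \mid x_{1:t})$; this certifies $\FE(x_{1:t})$ as a belief state.

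The subtlety I most want to flag is the admissibility of $g$: \autoref{def:belief} constrains only the \emph{input} to $g$, while the fixed model components $\BE$ and $\texthead_p$ may be baked into the function itself. It is tempting to sample forward instead via $\texthead_n(\cdot \mid \FE(x_{1:j-1}),\, \BE(\emptyset))$, but this would require recovering $\FE(x_{1:j-1})$ for $j > t+1$ from $\FE(x_{1:t})$ alone, which in general forces access to the raw tokens $x_{1:t}$---tokens that $g$ does not see. The backward decomposition sidesteps this obstacle because $\FE(x_{1:t})$ is reused unchanged at every step, while $\BE$ is applied only to the freshly drawn suffix of tokens that live entirely inside $g$. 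This asymmetry between the forward and backward decompositions is what makes the Prev head essential to the proof, and it mirrors the intuition from \autoref{sec:belief_discovery} that the Prev objective is what forces the forward encoding to coalesce into a sufficient statistic.
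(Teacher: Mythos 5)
Your proposal is correct and matches the paper's own proof: both use the reverse chain-rule decomposition of $\Pr(x_{t+1:T}\mid x_{1:t})$ into factors $\Pr(x_j\mid x_{1:t},x_{j+1:T})$, identify each factor with the Prev head applied to the fixed $\FE(x_{1:t})$ and the backward encoding of the already-sampled suffix, and sample from $x_T$ down to $x_{t+1}$. Your added remark on why the forward decomposition would fail is a nice clarification but does not change the argument.
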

 \begin{proof}
Let $s_t = x_{1:t}$ denote the prefix. To prove that $f_t \coloneqq \FE(s_t)$ is a belief state for $s_t$, we must show that, given $f_t$, one can sample from the conditional distribution $\Pr(x_{t+1:T} \ | \ s_t)$.

The key observation is that the conditional distribution can be decomposed as follows:
\begin{align*}
    &\Pr(x_{t+1:T} \ | \ s_t) = \Pr(x_T \ | \ s_t) \Pr(x_{T-1} \ | \ s_t, x_T) \cdots \Pr(x_{t+1} \ | \ s_t, x_{t+2:T}).
\intertext{For an ideal Belief State Transformer, this decomposition can be rewritten as:}
      &\quad= \texthead_p\big(x_T \ | \ f_t, \BE(\emptyset)\big) \cdot \texthead_p\big(x_{T-1} \ | \ f_t, \BE(x_T)\big) \cdots \texthead_p\big(x_{t+1} \ | \ f_t, \BE(x_{t+2:T})\big),
\end{align*}
Thus, using the forward encoding $f_t$, one can generate the remaining sequence $x_{t+1:T}$ by sampling in reverse order---first sampling $x_T$, then $x_{T-1}$ conditioned on $x_T$, and so on, until $x_{t+1}$. Each step involves using the Prev decoder and updating the backward encoder with the newly generated token. 

Since the forward encoding $f_t$ enables sampling from the conditional distribution $\Pr(x_{t+1:T} \ | \ s_t)$ in this way, it follows that $f_t$ is a belief state for $s_t$.
\end{proof}

\subsection{Next-token or Multi-token Prediction Does Not Guarantee Belief States }
\label{sec:no_belief}

\autoref{thm:belief_discovery} establishes that an ideal Belief State Transformer learns correct belief states. In contrast, the following two theorems demonstrate a fundamental limitation of standard next-token predictors and their variants when viewed from the belief state perspective.

\begin{theorem} \label{thm:belief_next}
Consider a standard next-token predictor with a forward encoder $\FE$ and output head $\texthead$ such that 
\begin{align} \label{eq:next}
    \texthead(\FE(x_{1:t})) = \Pr(x_{t+1} \ | \ x_{1:t}).
\end{align}
There exists a distribution $P(x_{1:t})$ over sequences and a next-token predictor of the form \cref{eq:next} such that the input to the output head is not a belief state.
\end{theorem}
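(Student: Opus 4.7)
The plan is to construct an explicit small counterexample in which the next-token conditional at some prefix length is degenerate in a way that permits the encoder $\FE$ to collapse distinct prefixes, even though the full future-conditional distributions at those prefixes differ. Concretely, I will take $T = 3$ and define a distribution where $x_1,x_2$ are independent uniform bits over $\{0,1\}$ and $x_3 = x_1$ deterministically. For the prefix $x_{1:1} = x_1$, one computes $\Pr(x_2 \mid x_1) = \mathrm{Uniform}\{0,1\}$ regardless of $x_1$, while $\Pr(x_{2:3}\mid x_1=0)$ and $\Pr(x_{2:3}\mid x_1=1)$ put mass on disjoint supports (namely $\{(0,0),(1,0)\}$ vs.\ $\{(0,1),(1,1)\}$).

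\paragraph{Constructing the predictor.} Next I would define $\FE$ and $\texthead$ that are ideal in the sense of \cref{eq:next} but for which $\FE(x_{1:1})$ is a constant. Set $\FE(0) = \FE(1) = c$ for an arbitrary fixed vector $c$, and let $\texthead(c)$ be the uniform distribution on $\{0,1\}$; this matches $\Pr(x_2\mid x_1)$ for both values of $x_1$. For the other prefix lengths I let $\FE$ be the identity on the input sequence and choose $\texthead$ to output the true conditional (uniform on $\{0,1\}$ for $t=0$, and $\delta_{x_1}$ for $t=2$), so the global next-token-prediction constraint $\texthead(\FE(x_{1:t})) = \Pr(x_{t+1}\mid x_{1:t})$ holds for every $t$.

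\paragraph{Ruling out a belief state.} Finally I would show directly from \cref{def:belief} that $\FE(x_{1:1}) = c$ cannot be a belief state for the prefix $x_{1:1}$. If some randomized map $g(c)$ sampled from $\Pr(x_{2:3}\mid x_1)$, then its output distribution would have to equal two different distributions simultaneously (the one supported on $\{(\cdot,0)\}$ when $x_1=0$ and the one supported on $\{(\cdot,1)\}$ when $x_1=1$), a contradiction. Hence the encoder output at $t=1$ fails to be a belief state, which is exactly the conclusion of \autoref{thm:belief_next}.

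\paragraph{Anticipated obstacle.} The arguments here are all straightforward once the example is in hand; the only mild subtlety is ensuring that the constructed $(\FE,\texthead)$ satisfies the next-token identity \emph{simultaneously} at every prefix length (not just at $t=1$), so that the counterexample is a legitimate ideal next-token predictor in the sense of the theorem rather than a trivial one defined only at the bad prefix. I would address this by explicitly defining $\FE$ and $\texthead$ for every prefix length as above and checking the identity case by case. No heavier machinery should be needed.
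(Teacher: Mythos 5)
Your proposal is correct and follows essentially the same route as the paper's proof: the paper also exhibits a distribution (uniform over $\{ACA, BCB\}$, where the first token determines the last while the next-token conditional after the first token is the same for both prefixes) and an encoder that collapses the two length-one prefixes to a single vector, then argues no randomized map of that collapsed encoding can reproduce both distinct future conditionals. Your bit-based example and case-by-case verification of \cref{eq:next} at every prefix length are just a minor repackaging of the same counterexample idea (you might only note that the constant $c$ should be chosen distinct from the encodings used at other prefix lengths so the head remains well-defined).
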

The proof is provided in \autoref{app:next_counterexample}.
Next, we analyze the case of multi-token training (see \autoref{sec:EoS}), where the model is asked to predict multiple future tokens at once.

\begin{theorem} \label{thm:belief_teacherless}
Consider the multi-token setting where, for predicting $H$ tokens into the future, the model is of the form 
\begin{align} \label{exp:teacherless}
    \texthead_j(\FE(x_{1:t})) = \Pr(x_{t+j} \ | \ x_{1:t}) \quad \text{for} \ j = 1, 2, \dots, H.
\end{align}
There exists a distribution $D = P(x_{1:t})$ over sequences and a multi-token model satisfying \cref{exp:teacherless} such that the input to the output head is not a belief state.
\end{theorem}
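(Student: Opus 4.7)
The plan is to construct a counterexample in the same spirit as the one used for \autoref{thm:belief_next}, exploiting the gap between what \cref{exp:teacherless} constrains and what \autoref{def:belief} demands. The multi-token condition only pins down the $H$ one-dimensional marginals $\Pr(x_{t+j}\mid x_{1:t})$, whereas a belief state must be sufficient (via some randomized $g$ depending only on the belief vector, as in the proof of \autoref{thm:belief_discovery}) to sample from the full joint $\Pr(x_{t+1:T}\mid x_{1:t})$. Hence any distribution with matching future marginals but divergent future joints across two distinct prefixes will yield a valid multi-token encoder whose output fails to be a belief state.

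Concretely, I would fix $T=3$ with binary tokens, take $x_1$ uniform on $\{0,1\}$, and set
\begin{align*}
\Pr(x_{2:3}\mid x_1=0) &= \tfrac{1}{2}\,\mathbf{1}[(x_2,x_3)=(0,0)] + \tfrac{1}{2}\,\mathbf{1}[(x_2,x_3)=(1,1)], \\
\Pr(x_{2:3}\mid x_1=1) &= \tfrac{1}{2}\,\mathbf{1}[(x_2,x_3)=(0,1)] + \tfrac{1}{2}\,\mathbf{1}[(x_2,x_3)=(1,0)].
\end{align*}
Under both conditionals the coordinates $x_2$ and $x_3$ are marginally uniform on $\{0,1\}$, while the joints differ dramatically: $(x_2,x_3)$ is perfectly correlated under $x_1=0$ and perfectly anti-correlated under $x_1=1$ (the familiar parity-style structure). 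I would then take the constant forward encoder $\FE(x_1)\equiv c$ together with output heads $\texthead_1,\texthead_2$ that always emit the uniform distribution on $\{0,1\}$. By construction $\texthead_j(\FE(x_1))=\Pr(x_{1+j}\mid x_1)$ for $j=1,2$, so \cref{exp:teacherless} holds at $t=1$ with $H=2$; the construction trivially extends to larger $T$ by appending deterministic tokens, and longer prefixes can be handled by letting $\FE$ behave correctly for $t\ge 2$ without disturbing the collapse at $t=1$.

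To close the argument, I would invoke \autoref{def:belief}: any randomized sampler $g$ depending only on $c$ induces a single distribution $g(c)$ over $(x_2,x_3)$, which cannot simultaneously equal the correlated joint (required when $x_1=0$) and the anti-correlated joint (required when $x_1=1$). Thus $c$ fails to be a belief state for at least one of the two prefix values, establishing the theorem. The only place I expect to tread carefully is interpreting \autoref{def:belief}—specifically that $g$ must be a function of the belief vector alone and cannot implicitly depend on the prefix $s$; this matches the convention implicit in the proof of \autoref{thm:belief_discovery}, where the reverse-order sampler uses $f_t$ and $\BE$ without any extra knowledge of $s_t$. Under that convention the construction is immediate, and essentially the same recipe would also prove \autoref{thm:belief_next} by padding the prefix so that the next-token marginal constraint does not force the encoder to distinguish the two values of $x_1$.
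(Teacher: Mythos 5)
Your proposal is correct and is essentially the paper's own argument: the paper's counterexample (uniform over $\{DAA, DBB, SAB, SBA\}$, with $\FE(D)=\FE(S)$ and uniform heads) is exactly your correlated-versus-anti-correlated construction up to relabeling of tokens, and both proofs conclude by noting that no single distribution $g$ of the collapsed encoding can match the two distinct future joints. Your reading of Definition~1 (that $g$ depends only on the belief vector, not on the prefix) is also the convention the paper uses.
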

The proof is provided in \autoref{app:proof_teacherless}.
In summary, the analysis in this section highlights the inherent limitations of next-token predictors: the input to the output head fails to capture the belief state.

\section{Experimenting with the Belief State Transformer}
\label{sec:EBSTA}

Given a model jointly trained on prefixes and suffixes, the representation is useful in new ways which are not available to simple forward Transformers.  Here we detail two different forms of search based on forward and backward probabilities (respectively) as well as belief state embedding extractions.

\paragraph{Setup.} We use TinyStories \citep{eldan2023tinystories}, a dataset consisting of synthetic short stories.  TinyStories aims to represent key challenges in text generation while keeping training tractable for small to medium scale models. We tokenize the dataset into a vocabulary space of size 1000, and discard stories greater than 256 tokens long resulting in a dataset consisting of 2.7 million stories. 

During evaluation, the models generate text using prefix-suffix snippets from an evaluation set of 100 unseen stories.  Given stories from two competing models, GPT4 is then asked to output an analysis of each story examining multiple factors (e.g. grammar, flow, cohesiveness, creativity) before outputting a final recommendation.  We follow best practices in evaluating with multiple trials and shuffling choice order. We report the winrate and confidence interval (CI) for each model.  See \cref{supp:tinystories_evaluations} for more details and examples of the GPT4 judge outputs and scoring.

\subsection{Goal-conditioned Text Generation}

In the goal-conditioned setting, the user provides the model with a prefix  and suffix, and the model infills the text in between.  See below for an example of the prefix and suffix.  We describe a goal-conditioned planning procedure with the Belief State Transformer for text generation in \cref{alg:known_goal_decoding}.

\begin{wrapfigure}[20]{r}{0.52\textwidth}
\vspace{-0.35in}
    \begin{minipage}{0.52\textwidth}
      \begin{algorithm}[H]
      \footnotesize %
        \caption{Goal-conditioned Planning}
        \label{alg:known_goal_decoding}
        \begin{algorithmic}[1]
\REQUIRE prompt $x_{1:t}$, goal $x_{t+k:T}$, horizon $k$, rollouts $n$
\STATE {Priority Queue $Q$ $\gets$ $(1,x_{1:t})$} 
\STATE {Candidates $C \gets \emptyset$}\\
{\color{CadetBlue} \emph{Roll-outs start here}}
\FOR{$j=1,\dots,n$} 
    \STATE {(priority $r$, sequence $s$) $\gets$ pop($Q$)}
    \WHILE{$|s|<t+k-1$}
     \STATE {\color{CadetBlue} \emph{Greedy generation}}\\
     {${x}_{\max} \gets \arg \max_x \texthead_n(x~ |~ \FE(s), \BE(x_{t+k:T}))$} \\
    \STATE   {\color{CadetBlue} \emph{Priority queue update for future generation}}\\
    {value $\texthead_{\max}=\texthead_n({x}_{\max} ~|~ \FE(s), \BE(x_{t+k:T}))$}\\
      \FOR{$x \neq  {x}_{\max}$}
        \STATE{$Q \gets (r\cdot \frac{\texthead_n(x ~|~ \FE(s), \BE(x_{t+k:T}))}{\texthead_{\max}},s+x)$} 
        \ENDFOR
      \STATE $s \gets s+ {x}_{\max}$
    \ENDWHILE
    
    \STATE {\color{CadetBlue} \emph{Appending $C$ with greedy generation}}\\
    $C \gets C \cup \{s+x_{t+k:T}\}$
\ENDFOR
\STATE {\color{CadetBlue} \emph{Scoring}}\\
{\bf Output:} $\arg\max_{x_{1:T}\in C}$ of \cref{eq:known_goal_score}
\normalsize
        \end{algorithmic}
      \end{algorithm}
    \end{minipage}
  \end{wrapfigure}
 
\textbf{Method.} The algorithm performs $n$ roll-outs. In each roll-out (starting at line 3), a candidate trajectory is generated that differs from the greedy trajectory but maintains high probability. The key features of the process are as follows:
 
1.  {\color{CadetBlue} \emph{Greedy generation}}:  Starting with a sequence $s$ popped from the priority queue $Q$, a candidate trajectory is extended up to a maximum length $k$ using argmax greedy selection (line 6). The completed trajectory is appended to the set of candidates $C$ (line 11).

  2. {\color{CadetBlue} \emph{Priority queue update for future generation}}:  Simultaneously, the priority queue $Q$ is updated to track alternative candidate sequences. This is done by appending non-greedy tokens to the queue, with their priority set by the \textit{relative suboptimality} of each token. Specifically, lines 8 and 9 add alternative tokens with priority equal to the current sequence priority multiplied by the ratio of the alternative token's probability to that of the greedy token. Since this ratio is $\leq 1$, priorities decreases with   suboptimality. Also, this ensures that partial sequences of different lengths remain comparable, as suboptimality is independent of sequence length.  This encourages branching at ambiguous points.

  3. {\color{CadetBlue} \emph{Scoring}}: Once candidate trajectories are generated, they are scored by evaluating the consistency of generated tokens $x_{t+1:t+k-1}$ with the goal $x_{t+k:T}$ using the next-head probability:
\begin{align}
\label{eq:known_goal_score}
\prod_{i=t+k}^T \texthead_n(x_i ~|~ \FE(x_{1:i-1}), \BE(x_{i+1:T}))
\end{align}
We then return the highest-scoring trajectory based on \cref{eq:known_goal_score}.

\textbf{Baseline.} We select the Fill-in-the-middle  (FIM) \citep{bavarian2022efficient} approach as a natural goal-conditioned baseline. FIM trains a single forward-only transformer where the input is the concatenated suffix and prefix, and the objective is next token prediction. Because the forward-only approach is radically less efficient, we sample suffixes uniformly at random during training. During inference, beam search \citep{graves2012sequence} is performed to search for high-probability sequences.

For fair comparisons, we train the FIM and Belief State Transformer with a similar amount of resources in terms of wall-clock and GPU usage, and set the architecture configuration so that the number of parameters between models is similar (80M for Belief, 85M for FIM). During inference time, we configure the search methods so that the computational budget is similar, i.e. setting the number of beams and depth to the values we use in \cref{alg:known_goal_decoding}. See \cref{supp:tinystories_training} for more details.

\begin{wrapfigure}[10]{r}{0.35\textwidth}
    \vspace{-0.2in}
    \centering
    \includegraphics[width=0.35\textwidth]{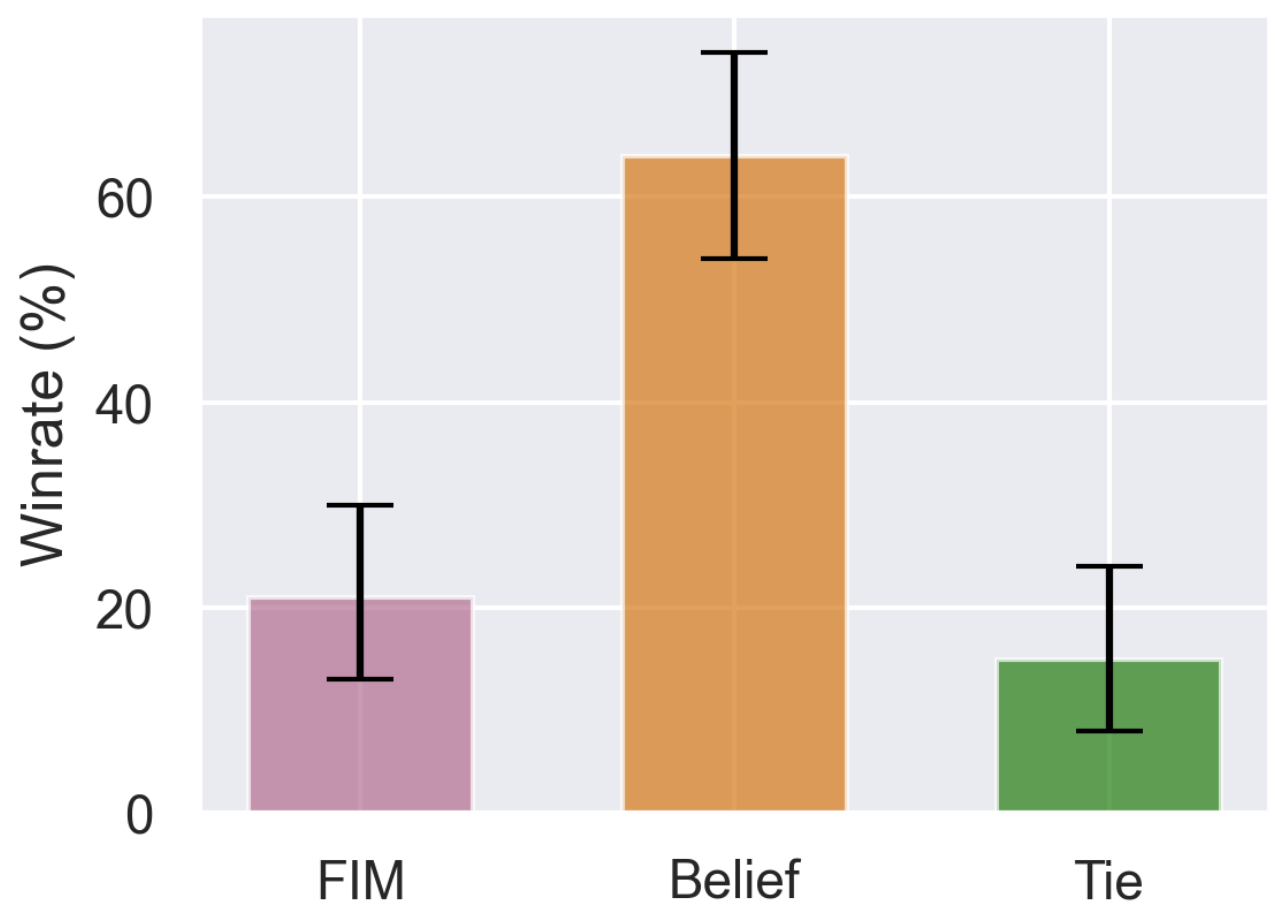} 
    \caption{ Storytelling winrates.}
\label{fig:goal_conditioned_results}
\end{wrapfigure}

\textbf{Results.} As seen in  \autoref{fig:goal_conditioned_results}, GPT4 prefers the Belief State transformer 64\% of the time with a binomial confidence interval of $\left[54\%,74\%\right]$. Qualitatively, the stories of Belief State Transformer are superior to FIM in several ways. The Belief State outputs more frequently display clear narrative structure, with a beginning, middle, and end, while the FIM model tends to generate shorter stories with simpler structure and repetitions. The FIM model also frequently failed to connect its completions to the suffix, resulting in abrupt and grammatically incorrect transitions. See below for an example. 
 
{\scriptsize
\begin{quote}
\textbf{Prefix:} \textit{
Once upon a time, there was a king. He was a big and strong king who ruled over his kingdom. One day, he wanted to take a nice and long bath, so he filled the bathtub with warm water
}

\textcolor{darkblue}{
\textbf{Belief:}
{. He got in and started to wash.
The king was very busy, but he was so busy that he didn't notice the water was getting too hot. Suddenly, he felt a sharp pain in his foot. He looked down and saw it burning. He was very scared and he quickly put his foot in the water.
The king was very worried. He
}}

\textcolor{darkred}{
\textbf{FIM:~~~}
{ and lots of bubbles. He stepped into the bathtub and started to play with the bubbles.
Suddenly, the king noticed that one of the bubbles was getting bigger and bigger. He asked the king, "What's happening?"
The king replied, "The bubble is getting bigger and bigger. It's getting
}}

\textbf{Suffix: } \textit{
 quickly grabbed a cloth and began to clean it up.
The king got so hot from cleaning up the mess that he decided to take another soak in the bathtub. He put a lot of bubbles in the water to make it nice and bubbly. He relaxed again and felt all the worries wash away.
}
\end{quote}
}

In this example, the Belief State Transformer's story has a clear narrative structure with setup, conflict and resolution. It is also able to correctly connect the prefix, generated text, and suffix. On the other hand, the FIM model has repetitions (``bigger and bigger''), confusing dialogue where a previously unmentioned person asks the king a question, and an abrupt and grammatically incorrect transition to the suffix (``It's getting quickly grabbed''). See \cref{supp:tinystories_examples,supp:tinystories_evaluations} for more examples as well as the grade reports and evaluation outputs of the GPT4 judge.

\subsection{Unconditional Text Generation}

\paragraph{Method.}  Next, we investigate the unconditional setting, or when the goal is unknown. 
We reuse the logic in \cref{alg:known_goal_decoding} with two following changes: setting the goal to the empty input $z'_\emptyset := \{ \emptyset \}$, and scoring the sequences with the previous head $\texthead_p$ over a fixed amount of suffix tokens $k$. 
We choose to use the previous head rather than the next head as a semi-independent evaluator of next-generated tokens to reduce the bias associated with self-evaluation.
By scoring over the last $k$ tokens, this selects for trajectories whose endings are more likely.

\begin{align} 
\label{eq:unknown_goal_score}
\prod_{i=T-k}^T \texthead_p(x_{i}~|~\FE(x_{1:i-1}), \BE(x_{i+1:T}))
\end{align}
\begin{wrapfigure}[7]{r}{0.4\textwidth}
    \vspace{-0.3in}
    \centering
    \includegraphics[width=0.35\textwidth]{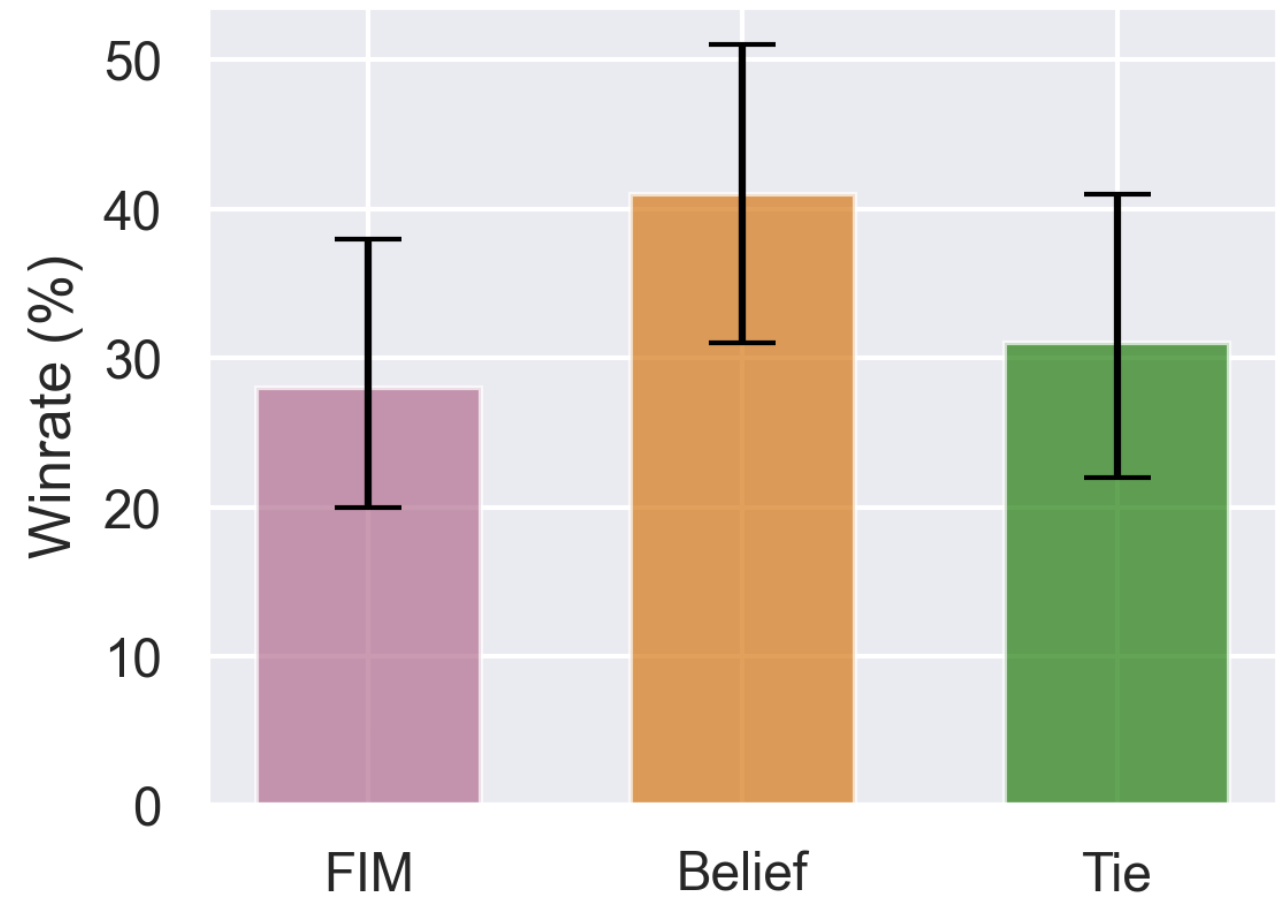} 
    \caption{ Unconditioned winrates.}
\label{fig:unconditioned_results}
\end{wrapfigure}
\paragraph{Results.}  As seen in \autoref{fig:unconditioned_results}, the Belief State transformer outperforms the FIM model. The Belief State Transformer outputs consistently have correct grammar, whereas the FIM models often abruptly end in the middle of sentences. This grammatical flaw is consistently picked up by the GPT4 judge, and results in the FIM's lower winrate. See below for an example.
{\scriptsize
\begin{quote}
\textbf{Prefix:} \textit{
 You don't have to be scared of the loud dog, I'll protect you". The mole felt so safe with the little girl. She was very kind and the mole soon came to trust her. He was 
}

\textcolor{darkblue}{
\textbf{Belief:}
{ so gentle and kind.
The little girl was so happy to have a new friend. She wanted to show the mole all her favorite things. She took him to the park and showed him all the different things she could see.
The mole was so excited to see the different things. He ran around and played with the other kids. He even found a ball and a toy car. The little girl was so happy to see the mole having fun.
}}

\textcolor{darkred}{
\textbf{FIM:~~~}
{ always there to protect her.
One day, the mole was walking through the woods when he heard a loud noise. He looked up and saw a big, scary dog. The dog was barking and growling at the mole. The mole was so scared that he ran away as fast as he could.
The dog chased after the mole, but the mole was too fast. The dog was getting closer and clos
}}

\textbf{Suffix:} $\emptyset$

\end{quote}
}
\subsection{Ablations}

\begin{wrapfigure}[11]{r}{0.35\textwidth}
    \vspace{-0.4in}
    \centering
    \includegraphics[width=0.35\textwidth]{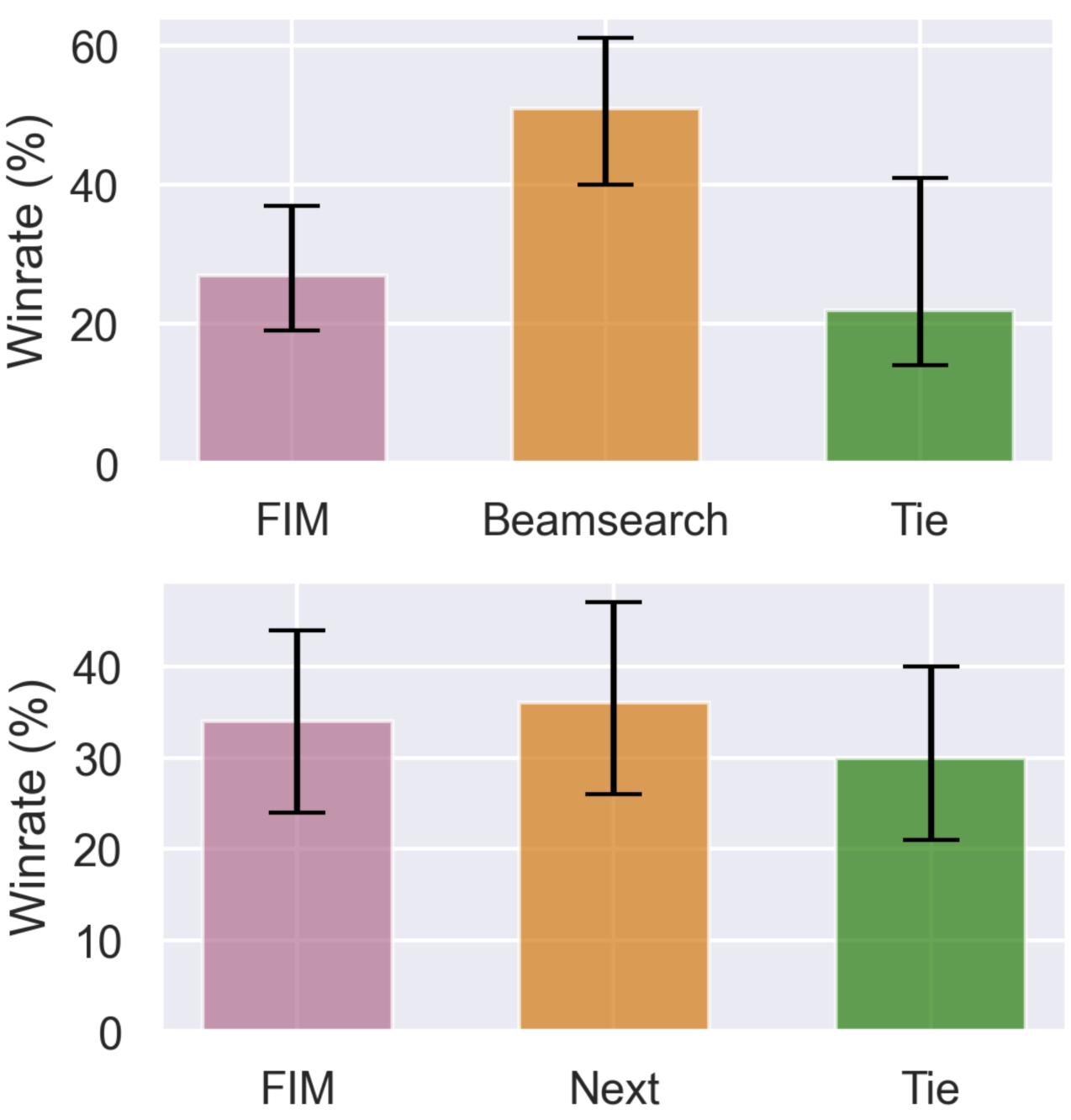} 
    \caption{Ablations.}
\label{fig:tinystories_ablations}
\end{wrapfigure}
Finally, we conduct a few ablations to verify our design choices. First, the \textbf{Belief Beamsearch} ablation uses beamsearch to generate the candidate set rather than the priority queue scheme in \cref{alg:known_goal_decoding}. Next, in the unconditioned setting, the \textbf{Belief Next Score} ablation uses the next head $\texthead_n$ to score the suffix instead of $\texthead_p$.

\cref{fig:tinystories_ablations} shows the performance drops in both cases. The \textbf{Belief Beamsearch} outputs is more prone to repetition than \cref{alg:known_goal_decoding}. Next, the \textbf{Belief Next Score} ablation frequently outputs stories that end abruptly and incorrectly, similar to the FIM baseline in the unconditional setting. 

\subsection{Understanding Belief States}
\begin{figure}[h]
    \centering
    \includegraphics[width=0.49\textwidth]{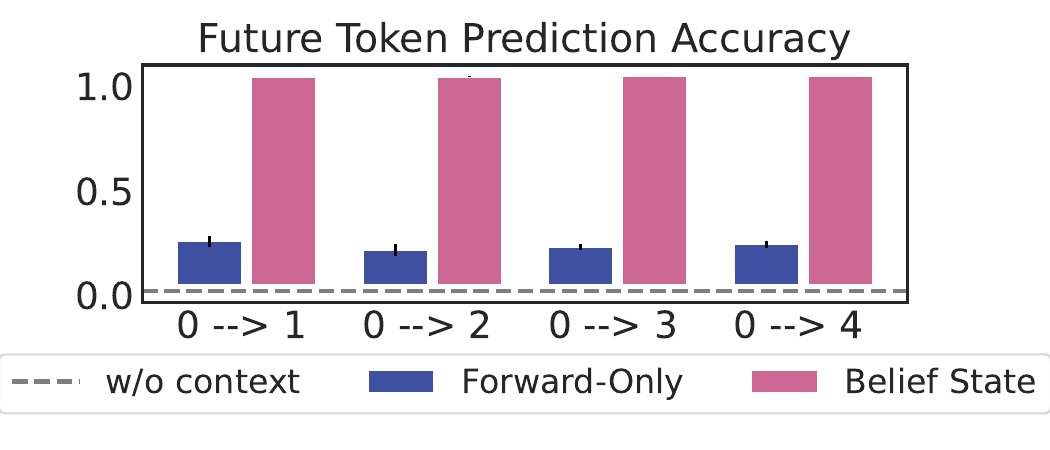}
    \includegraphics[width=0.49\textwidth]{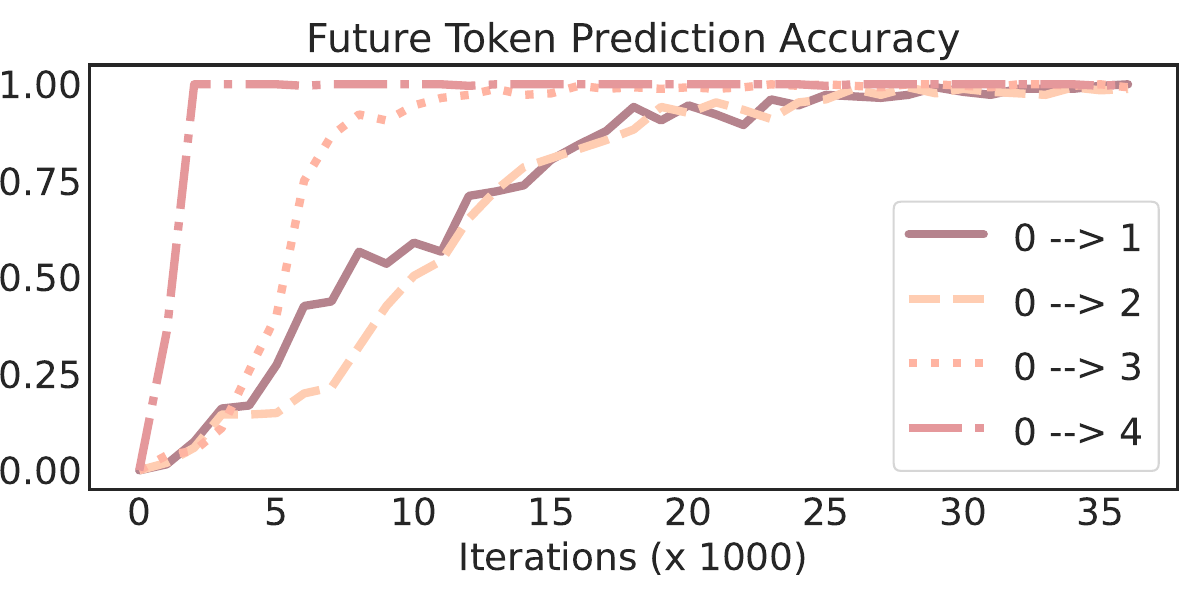}
    \includegraphics[width=0.75\textwidth]{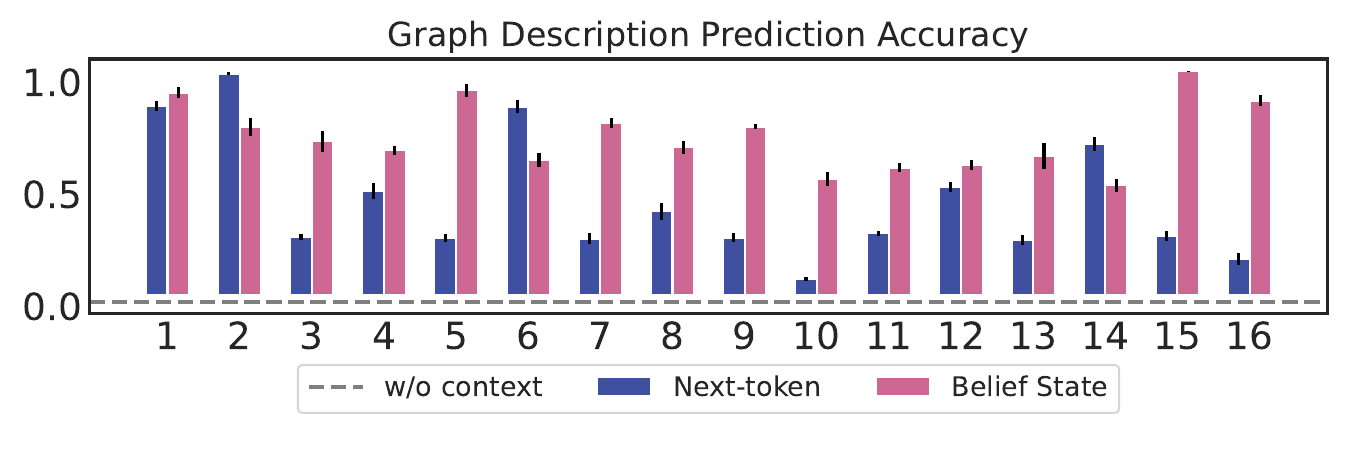}
 \caption{\textbf{Probing results on star graphs} show that Belief State Transformer's learned representation captures more information about future tokens (top-left). Top right: this belief state develops earlier in training for tokens which are further in the future. Bottom: besides future tokens, the inputted graph descriptions are also captured in the embeddings.}
    \label{fig:stargraph_probing}
\end{figure}

We investigated the representations learned by the Belief State Transformer to provide insight into what the states capture and how they are learned.  First, we verified that the representations are indeed belief states, by training small MLP networks from the first hidden state on the G(2,5) Stargraph to predict the future token on a specific timestep.  We found that the Belief State Transformer has the information to predict all future tokens, while the information contained within the state learned using the Forward-Only objective is incomplete (\cref{fig:stargraph_probing}, top-left).

An analysis of how the probe accuracy improved over the course of training  revealed a clear trend.  The belief state captures information about the tokens at the end of the sequence at the beginning of training, while information about the earlier tokens is learned later (\cref{fig:stargraph_probing}, top-right).

The same probing technique can understand how well the Belief State successfully captures information about the graph description.   We found that the Belief State Transformer captures more information about the graph description than the Forward-Only model (\autoref{fig:stargraph_probing}, bottom).  
Capturing the graph description completely is sufficient but not necessary for learning the belief state, since all the information in the graph description might not be used for predicting the future tokens.

\section{Practical Implementation}
Several alterations to the BST architecture and objectives support greater flexibility and scalability in training. Note that experimental results above \cref{sec:stargraph_results,sec:EBSTA} were obtained using the original BST architecture and objective. We describe these changes below:

\textbf{Subsampled loss}: Aggressively subsampling prefix–suffix pairs each iteration instead of using all $O(T^2)$ combinations achieves a substantial training speed‐up with only minimal impact on final performance.

\textbf{Decoupled text heads and Flexible loss weighting}: We included a pure GPT-style next-token predictor $\texthead_{\text{GPT}}$ which takes in the forward embedding, i.e. $\FE(\cdot)$, to predict the next token. We then included loss weights $\gamma$ and $\lambda$ for the pure next-token prediction and the BST objectives respectively:
\begin{align}
\label{eq:updated_training_objective}
\E_{t,x_{1:t},k\leq T-t}  \left[
\gamma \log\frac{1}{\texthead_{\text{GPT}}( x_{t+1} \mid \FE(x_{1:t}))} \right.
&\;+\; \left. (1 - \gamma) \right.
\\ \left(
\lambda\log\frac{1}{\texthead_n( x_{t+1} \mid \FE(x_{1:t}), \BE(x_{t+k:T}))}  \right. &\;+\; \left. \left. 
(1 - \lambda)\log \frac{1}{\texthead_p( x_{t+k-1} \mid \FE(x_{1:t}), \BE(x_{t+k:T})} \right) \right]\notag 
\end{align}

\textbf{Sharing encoder parameters} We tie the forward and backward encoders by sharing all their parameters. Inspired by BERT \citep{devlin2018bert}, we introduce a learned segment-embedding layer: it maps a binary segment ID (0 = prefix, 1 = suffix) to an embedding vector, which we add to the positional embeddings in each transformer residual block.

\autoref{tab:updated_bst_results} presents a comparison of performance and training speed on TinyStories across the standard GPT, the original BST, and the improved BST which incorporates the above-mentioned modifications. As shown, by shifting the training objective’s emphasis toward next-token prediction (i.e., $\lambda = 0.9$), the improved BST model is able to match the standard GPT on next-token prediction. This is arguably important, since inference is usually carried out in a forward autoregressive fashion. Moreover, we accelerate BST’s training by reducing its parameter footprint through shared encoders and by applying more aggressive subsampling of training pairs.

\begin{table}[ht]
  \centering
  \begin{tabular}{lccc}
    \toprule
    & GPT & BST (original)  & BST (improved) \\
    \midrule
    No. of encoder parameters during training & 57 million & 114 million & 57 million \\
    Next token loss & 0.929 & 1.013 & 0.926 \\
    Previous token loss & - & 1.110 & 1.268 \\
    Training speed (iterations/second) & $\sim3.086$ & $\sim0.205$ & $\sim0.801$ \\
    \bottomrule
  \end{tabular}
  \caption{Comparison of performance and training speed on TinyStories between standard GPT, the original BST, and the improved BST, measured on a single H100 GPU. The original BST trains on all $O(T^2)$ prefix-suffix pairs and uses separate encoders (see \cref{eq:networks} and \cref{eq:training_objective}). The improved BST subsamples 2\% of all pairs using   \cref{eq:updated_training_objective} with $\gamma = 0.9$ and $\lambda=0.0$, and uses shared encoders.}
  \label{tab:updated_bst_results}
\end{table}

\section{Other related work}
Many previous works explore non-left-to-right approaches for sequence modelling. ELMo \citep{peters2018deep} and BERT \citep{devlin2018bert} were prominent approaches that trained language representations with past and future information.
\cite{gu2019insertion} model the generation order as latent variables and performs beam search over the generation order itself to produce the final text. This is beneficial because  in general, the best ordering for decoding is task-dependent.
On the other hand, \cite{welleck2019non} explore generating text in a binary tree order, but their model struggles to outperform traditional left-to-right autoregressive generation in tasks like language modeling, machine translation, sentence completion, and word reordering. 
In \cite{nguyen2024meet}, they propose training two separate transformers and implement several strategies to ``meet in the middle'' during decoding. In contrast, our approach involves jointly training a transformer capable of both forward and backward decoding, which has important implications for creating compact belief states. A general study of the quality of backward prediction is done in ~\cite{papadopoulos2024arrows} discovering that backward prediction is possible but generally slightly worse than forwards prediction.  Somewhat further afield, combined forward/backward decoding approaches with RNNs have proved useful~\citep{DBLP:journals/corr/MouYLZJ15,serdyuk2018twinnetworksmatchingfuture, mangal2019lstm} and similarly for neural machine translation~\citep{liu-etal-2016-agreement,zhang2018regularizingneuralmachinetranslation}. 

 The concept of predicting multiple tokens has been explored in other works. \cite{gloeckle2024better, cai2024medusa} aim to increase decoding speed by employing models that predict multiple tokens rather than just the next token. DeepSeek-V3 \citep{liu2024deepseek} pretrains the LLM to predict the next two tokens, and shows the objective is beneficial in the large scale setting. Such approaches align with our theoretical insights that predicting multiple tokens fosters the creation of more robust representations of future contexts. 

 \cite{shai2024transformersrepresentbeliefstate} show that transformers create a non-compact representation of belief states within their residual stream. Compact belief states are  accessible in state-space models \citep{hasani2020liquidtimeconstantnetworks,gu2023mamba}, although this comes with different trade-offs compared to transformer-based approaches.  For example, the Mamba training process is known to fail on star graphs~\citep{bachmann2024pitfalls}.

\section{Conclusion}

The Belief State Transformer advances goal-conditioned next-token predictors, effectively addressing the limitations of traditional forward-only transformers.  The ability of our model to learn a compact belief state provides a maximal solution: since the belief state contains \textit{all} the information useful in predicting the future, no more complex objective can elicit more information.  Through experiments with both star graphs and story writing tasks, we have demonstrated the necessity of each component of our model, particularly in challenging scenarios where standard transformers struggle. 

Looking ahead, while our results demonstrate the superior performance of the Belief State Transformer in small scale story writing, exploring its application to other goal-conditioned tasks would be valuable. Our current experiments serve as proof-of-concept. Further investigation into the scalability of our approach to larger practical scenarios is essential.  

\clearpage 

\section{Acknowledgements} 
This work was supported in part by the NSF CAREER 2239301, NSF 2331783, and DARPA TIAMAT HR00112490421 grants. The authors would like to thank the Marco Rossi and the MSR AI Frontiers team for fruitful discussions and support, and the ICLR reviewers for their constructive feedback.

\clearpage
\appendix
\section{Statement Of Contributions}

Edward Hu developed the BST, set up the code and infrastructure, proposed and ran the experiments.
Kwangjun Ahn developed the BST, assisted with the Stargraph experiments, and contributed to theoretical analysis.  
Qinghua Liu contributed the parity proof and gave feedback to the project.
Haoran Xu assisted in debugging, ran baselines, and gave feedback.
Manan Tomar contributed early architecture code, ran the probe analysis, and gave feedback.
Ada Langford contributed code for the Stargraph experiment.
Dinesh Jayaraman advised EH and gave feedback.
Alex Lamb oversaw the project, developed the BST, debugged models, proposed experiments, and ran the probe analysis.
John Langford oversaw the project, developed the BST, contributed to BST theory, proposed experiments, and assisted with writing.

\section{Proofs of Theoretical Results}
\label{app:proof}
\subsection{Proof of \autoref{thm:hard}}
\label{pf:thm:hard}

In this section, we provide a formal statement and a proof of \autoref{thm:hard}.

As discussed in the main text, the starting point of our argument is the observation by  \citep[Appendix F.2]{bachmann2024pitfalls} that the model quickly learns the aforementioned ``flawed cheat'' solution, which prevents it from learning the true solution. Specifically,  next-token predictors rapidly adopt this flawed shortcut, but make little progress in correctly predicting the first vertex on the path. Below, we provide formal evidence that learning the flawed cheat indeed hinders the model from learning the true solution.

 In essence, once the flawed cheat solution is perfected, the model receives supervision only from the prediction of the first vertex after the starting node. This effectively reduces the star graph task to the following simplified task:

\begin{task}\label{task:first-vertex-pred}
Given a stargraph and a goal node, predict the correct neighbor of the starting node.
\end{task}

The formal statement of \autoref{thm:hard}  is that \autoref{task:first-vertex-pred} is at least as difficult as learning the full parity function (i.e., predicting whether the sum of the elements in a binary string is even or odd). 

\begin{theorem}\label{thm:hard_formal} 
For every full parity problem, there exists a stargraph such that solving  \autoref{task:first-vertex-pred} provides a solution to the parity problem. 
\end{theorem}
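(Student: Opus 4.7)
The plan is to encode an arbitrary $n$-bit parity instance $b=(b_1,\dots,b_n)$ as a single stargraph $G_b$ whose \autoref{task:first-vertex-pred} answer reveals $p(b) := b_1\oplus\cdots\oplus b_n$. Since with $d=2$ there are only two possible first-vertex labels, a fixed bijection between those two labels and $\{0,1\}$ lets the parity be read off from any correct first-vertex prediction.

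First I would construct $G_b$ as a $G(2,n)$ stargraph, fixing the start label $n_1$ and two distinguished neighbor labels $L,R$ as its two possible first vertices. For each internal depth $i=2,\dots,n$ I would sample two fresh random labels $(u_i,v_i)$ and place them on the two arms in an order determined by the running parity $p_i := b_1 \oplus \cdots \oplus b_i$: specifically, put $u_i$ on the arm whose depth-$1$ node is $L$ iff $p_i = 0$. The goal $n_l$ is then pinned to a fixed label that, by construction, always sits at depth $n$ of the arm beginning with $L$; the running-parity construction then guarantees that this arm starts with $L$ exactly when $p(b)=0$, so the correct first vertex is $L$ iff $p(b)=0$ and $R$ otherwise.

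The reduction is then immediate: given any algorithm $\mathcal{A}$ that solves \autoref{task:first-vertex-pred}, I would run $\mathcal{A}$ on $G_b$, receive one of $\{L,R\}$, and output the corresponding bit in $\{0,1\}$ as the parity prediction. Because the intermediate labels $(u_i,v_i)$ are drawn independently of $b$ and the edge list can be presented in a uniformly random order, no single edge carries information about any single $b_i$ in its marginal distribution, so the algorithm cannot short-circuit the task by inspecting a local pattern; it must effectively trace through all $n$ crossover layers between $n_l$ and $n_1$.

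The main obstacle I would expect is verifying that the construction does not accidentally leak parity through the \emph{identities} of $L$, $R$, or the goal label, all of which must be label-independent of $b$ in distribution for the reduction to be faithful. Once those independence properties are checked, the per-instance reduction claimed in the theorem follows directly, and the parity hardness results cited in the main text (\emph{e.g.,} \cite{shalev2017failures,abbe2020poly,abbe2022non}) transfer to \autoref{task:first-vertex-pred}, yielding the learning-theoretic statement of \autoref{thm:hard}.
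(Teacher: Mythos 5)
There is a genuine problem with your construction: it does not embed the parity computation into the task that the solver of \autoref{task:first-vertex-pred} has to perform, so the hardness transfer you claim in your last paragraph does not follow. In your graph $G_b$, the individual bits $b_i$ are not recoverable from the instance at all: the depth-$i$ labels $(u_i,v_i)$ are fresh, exchangeable random labels, so which of them lands on which arm carries no decodable information about $p_i$, and the only $b$-dependent, observable feature of $G_b$ is which arm terminates in the fixed goal label. That single feature is exactly $\mathrm{parity}(b)$, and it is your reduction --- not the solver --- that computes it, since placing the goal requires evaluating the running parity $p_n=p(b)$. Consequently the induced learning problem ``edge list $\mapsto$ first vertex'' on your instances is just ordinary arm-tracing with $d=2$; a learner never has to XOR anything it is given, so the gradient-hardness results for parity cannot be invoked. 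The independence property you emphasize (``no single edge carries information about any single $b_i$'') is in fact the opposite of what a meaningful reduction needs: the bits must be \emph{locally present} in the input while only the label requires aggregating all of them. (There is also an internal inconsistency in your description: you say the goal ``always sits at depth $n$ of the arm beginning with $L$'' and simultaneously that the correct first vertex is $L$ iff $p(b)=0$; presumably you meant the goal is placed on the $L$-arm iff $p(b)=0$, which is the reading I have used above.)

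The paper's proof avoids this by using fixed vertex identities $-n,\dots,-1,0,1,\dots,n$ and encoding each bit \emph{locally and per level}: vertex $0$ is the center joined to $\pm 1$, and at level $i$ the edges go ``straight'' ($i\to i{+}1$, $-i\to -i{-}1$) if $x[i]=0$ and ``crossed'' ($i\to -i{-}1$, $-i\to i{+}1$) if $x[i]=1$ (\autoref{fig:parity-stargraph}). The map $x\mapsto$ graph touches each bit only once and never computes any parity, each $x[i]$ is directly readable off the level-$i$ edges, yet the correct first vertex toward target $n$ is $+1$ or $-1$ according to the parity of the number of crossings, i.e.\ $\mathrm{parity}(x)$. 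That is what makes ``solving \autoref{task:first-vertex-pred} provides a solution to the parity problem'' a hardness-preserving statement rather than a vacuous one. To repair your argument you would need to replace the running-parity placement of random labels with such a per-bit, locally decodable encoding of $b$ into the edge structure.
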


Notably, empirical evidence shows that gradient-based methods struggle to learn the full parity function in standard training setups, especially in high dimensions \citep[e.g.,][]{shalev2017failures, abbe2020poly}.
It has been also  conjectured that learning the full parity function requires a number of samples and computations exponential in the input dimension \citep[e.g.,][]{abbe2022non}.
Consequently, \autoref{task:first-vertex-pred} inherits this difficulty, as it encapsulates learning the full parity function as a special case.

\begin{proof}[{\bf Proof of \autoref{thm:hard_formal}}]
\begin{figure*}[h]
    \centering
    \includegraphics[width=0.8\textwidth]{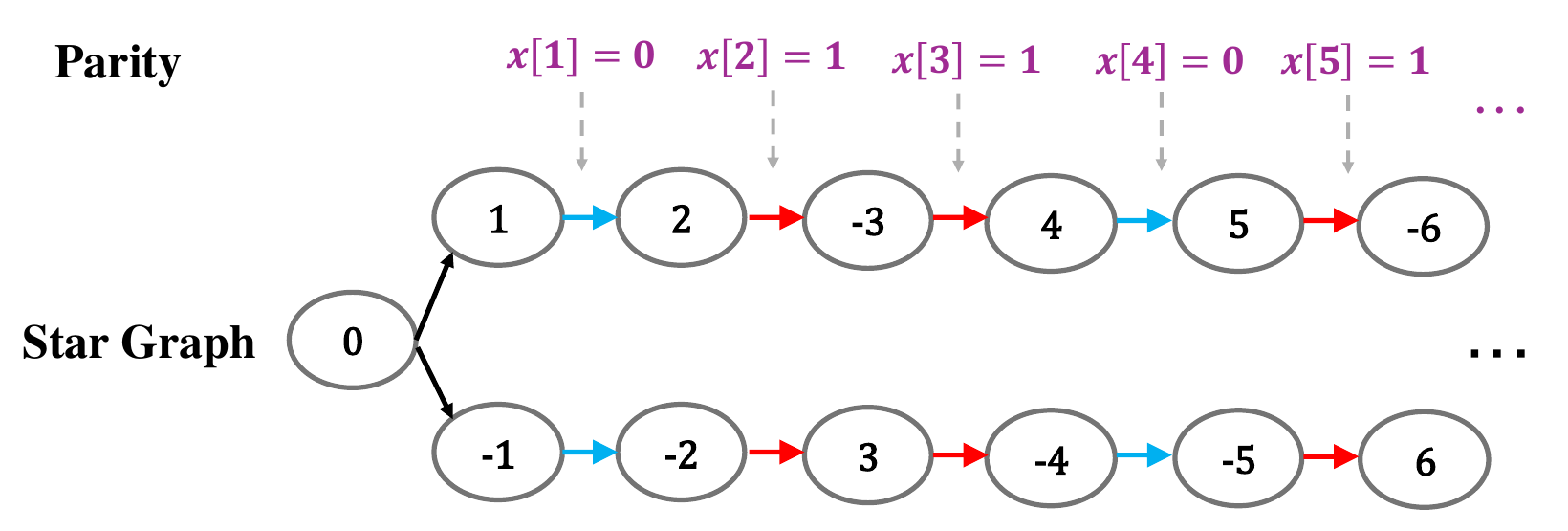}
    \caption{Reduction from learning parity to learning star graph. Note that if $\text{parity}(x)=0$ (even), then the shortest path from center $0$ to vertex $n$ passes $1$. Otherwise, it passes $-1$. As a result, solving the star graph instance induced by $x$ implies correctly predicting the parity of $x$.}
    \label{fig:parity-stargraph}
\end{figure*}
Consider a star graph with $2n+1$ vertices. For simplicity, we index the vertices by $-n,\ldots,-1,0,1,\dots,n$.  Given a binary string $x\in\{0,1\}^n$, we generate a star graph instance as follows: First we make vertex $0$ the center and connect it to vertex $\pm 1$. For each $i\in\{1,\ldots,n-1\}$, if $x[i] = 0$ we connect vertex $i$  to  $i+1$ and vertex $-i$ to $-i-1$. Otherwise, we connect $i$ to $-i-1$ and $-i$ to $i+1$. We illustrate this reduction in \autoref{fig:parity-stargraph}. 

The parity of $x$ can now be determined by the first step towards target vertex $n$. Specifically, if $\text{parity}(x)=0$ (even), then the first vertex is $1$. Otherwise, it is $-1$. Consequently, if an algorithm can solve \autoref{task:first-vertex-pred} for star graphs generated in this way, then it equivalently solves learning the full parity problem.
\end{proof}

\subsection{Proof of \autoref{thm:belief_next}}
\label{app:next_counterexample}

We prove this theorem by constructing a counterexample. Consider the distribution $D$ that is uniform over the sequences $\{ACA, BCB\}$. Our goal is to show that there exists an ideal next-token predictor, in the sense of \cref{eq:next}, that does not learn a belief state.

First, note that  an ideal next-token predictor achieves a log loss of $1$ bit on the first token and $0$ log loss on all subsequent tokens.   
Define the forward encoder $\FE(x_{1:t})$ to output a two-dimensional vector for any partial sequence $x_{1:t}$. Specifically, we define the encoding as follows:
\begin{align*}
    \FE(\emptyset) &= (-1,-1), \\
    \FE(A) = \FE(B) &= (-1,1), \\
    \FE(AC) &= (1,-1), \\
    \FE(BC) &= (1,1).
\end{align*}
Next, we define the output head $\texthead$ for the next-token predictions:
\begin{align*}
    \texthead(-1,-1) &= \text{uniform}(A,B), \\
    \texthead(-1,1) &= C, \\
    \texthead(1,-1) &= A, \\
    \texthead(1,1) &= B.
\end{align*}
It can be verified that this setup achieves the optimal performance: a $1$-bit log loss on the first token (since $A$ and $B$ are equally likely), and $0$ log loss on subsequent tokens (since the continuation of the sequence is fully deterministic).
However, the key observation is that $\FE(A) = \FE(B) = (-1,1)$ is not a belief state for the remainder of the sequence. This is because any function $g(-1,1)$ applied to this encoding  outputs a distribution that is independent of whether the initial token was $A$ or $B$. Thus, the encoding fails to capture the information necessary to distinguish between the two possible continuations of the sequence, violating the definition of a belief state.
Therefore, this next-token predictor does not output a belief state, completing the proof.

\subsection{Proof of \autoref{thm:belief_teacherless}}
\label{app:proof_teacherless}

We again use a counterexample construction to prove this theorem. Consider the distribution $D$, which is uniform over the four sequences $\{DAA, DBB, SAB, SBA\}$. Here, the initial tokens $D$ and $S$ determine whether the next symbol is doubled or not. Our goal is to demonstrate that a multi-token model, in the sense of \cref{exp:teacherless}, does not learn a belief state.

First, we construct an ideal multi-token model that suffers a log loss of $2$ bits on each token except for the last token. Note that this is the minimal log loss achievable.
Define the forward encoder $\FE(x_{1:t})$ to output a two-dimensional vector for any partial sequence $x_{1:t}$. Specifically, we define the encoding as follows:
\begin{align*}
    \FE(\emptyset) &= (-1,-1), \\
    \FE(D) = \FE(S) &= (-1,1), \\
    \FE(DA) = \FE(SB) &= (1,-1), \\
    \FE(DB) = \FE(SA) &= (1,1).
\end{align*}
Now, define the output head for predicting the next tokens at different time steps as follows:
\begin{align*}
    \texthead_1((-1,-1)) &= \text{uniform}(S,D), \\
    \texthead_2((-1,-1)) &= \text{uniform}(A,B), \\
    \texthead_3((-1,-1)) &= \text{uniform}(A,B), \\
    \texthead_1((-1,1)) &= \text{uniform}(A,B), \\
    \texthead_2((-1,1)) &= \text{uniform}(A,B), \\
    \texthead_1((1,-1)) &= A, \\
    \texthead_1((1,1)) &= B.
\end{align*}
This multi-token model clearly achieves the minimal log loss. However, the key observation is that $\FE(S) = \FE(D) = (-1,1)$ is not a belief state for the remainder of the sequence. The reason is that any function $g(-1,1)$ applied to this encoding  outputs a distribution that is independent of whether the initial token was $S$ or $D$. Thus, the encoding fails to capture the necessary information for predicting the future tokens. Therefore, this multi-token model does not output a belief state, completing the proof.

\section{Star Graph Details}
\label{supp:stargraph_details}
In this section, we provide detailed descriptions of the star graph experiments from \autoref{sec:EoS}.

\begin{itemize}
    \item {\bf Data Generation.} We use the data generation code from the official codebase. To generate the star graph structure, each node $n_i$ is sampled uniformly from the set $\{1, \ldots, N\}$. For all experiments, we set $N = 50$, and we generate 8M examples for each data set. 
    
    \item {\bf Data Tokenization.} We follow the same tokenization settings as in \citep[Section G.1]{bachmann2024pitfalls}.
    
    \item {\bf Architectures.} Both the forward and backward encoders consist of $n_{\text{layers}} = 6$ layers with an embedding dimension of $768$, $n_{\text{head}} = 8$ attention heads, and an MLP expansion factor of $1$. The baselines use the same configuration. 
    
    \item {\bf Training Details.} In all cases, we use the AdamW optimizer with a weight decay strength of $0.1$. For $G(2, 5)$, the learning rate is set to $\eta = 3 \cdot 10^{-4}$, while for $G(5, 5)$ and $G(2, 20)$, a smaller learning rate of $\eta = 1 \cdot 10^{-4}$ is used. We run all experiments for $100$ epochs to ensure convergence. All models run quickly, finishing 100 epochs in less than 2 hours. The belief state transformer reaches $\approx100\%$ accuracy in a few minutes on the easier graphs. Each model is trained on a single A100 / H100 GPU with 80GB memory. 
    \item {\bf Evaluation.} We evaluate the models on 10,000 unseen graphs. To succeed, the model must output the entire path correctly, and we report the success rate. 
\end{itemize}

\subsection{Baseline details}
We use the \href{https://github.com/gregorbachmann/Next-Token-Failures}{official codebase} from \cite{bachmann2024pitfalls} to instantiate and train the baselines. 

\begin{itemize}

    \item  \textbf{Forward-only.} This baseline trains a transformer to do next-token prediction over the sequence data. Teacher forcing is used during training, so the model gets ground truth intermediate sequences as input. 

\item \textbf{Data Augmentation.} Here, our data augmentation process entails simply modifying the data using some domain knowledge, before feeding it into the standard forward-only transformer training process.

Given the original training sequences $\left[ \mathcal{E} \ | \ n_1, n_l\ | \ n_1, n_2, n_3, \ldots n_l\right]$, we propose to replace the task specification $n_1, n_l$ with $n_1, n'$ where $n' \sim \{ n_2 \ldots n_{l-1}\}$ is an intermediate node in the path.
The resulting training sequence would then look like:  $\left[ \mathcal{E} \ | \ n_1, n' \ | \ n_1, n_2, n_3, \ldots n_l\right]$. The idea here is that replacing the long horizon goal with a subgoal closer to the start would make learning the goal conditioning easier.

\item \textbf{Fill-in-the-Middle.} This is similar to the domain augmentation baseline, except we replace intermediate nodes with intermediate suffixes $n_j, \ldots n_T$. The input to the transformer is then: $\left[ \mathcal{E} \ | \ n_1, n_T \ | \ n_j, n_{j+1}, \ldots n_l \ | \ n_1, n_2, n_3, \ldots n_l\right]$. During training time, we uniformly sample suffixes of different lengths.

    \item 
\textbf{Multi-token Prediction.} This baseline changes the objective from next-token prediction, to multiple token prediction. Given the graph description, start and goal, $\left[ \mathcal{E} \ | \ n_1, n_l\right]$ the model needs to predict the path $\left[ n_1, n_2, n_3, \ldots n_l\right]$ in a single forward pass.
\end{itemize}

\section{TinyStories experiments}
\label{supp:tinystories_details}
We train all models on a single A100 / H100 GPU with 80GB memory.

\subsection{Training}
\label{supp:tinystories_training}
 The Belief State Transformer's encoders have the following settings: $n_\text{layers} = 8$, blocks with embedding
dimension $e_\text{dim} = 768$, and $n_\text{heads}=8$. The textheads $\texthead_n, \texthead_p$ are implemented as a single 3-layer MLP with dimensionality 512 and ReLU activations that outputs two predictions. The total model size is 80 million parameters.

The FIM baseline trains a forward only transformer with the following settings: $n_\text{layers} = 12$, blocks with embedding
dimension $e_\text{dim} = 768$, and $n_\text{heads}=8$. The total model size is 85 million parameters.

The FIM baseline's training is simple---for a given prefix $x_{1:t}$, we prepend a random suffix  $x_{k:T}$ for $k \sim \mathcal{U}(t+2, T)$ and train the transformer to predict $x_{t+1}$. We create as many prefix-suffix inputs as the GPU memory allows. Even then, for longer sequences, it becomes increasingly intractable for the FIM baseline to train on all possible prefix-suffix inputs.

The Belief State Transformer on the other hand, is able to train on all $O(T^2)$ possible prefix-suffix combinations for a sequence of length $T$, by carefully accumulating text head gradients over all pairs before computing the forward and backward encoder gradients. Without such a scheme, training goes from a few hours to multiple days. As a result, the Belief State Transformer is only around $3-5\times$ slower than the FIM baseline's suffix-sampling update, yet is able to train on all $O(T^2)$ possible prefix-suffix inputs.

We trained the Belief State transformer on 1 epoch of the TinyStories dataset with 2.7M stories, with a batch size of 256. The training takes around 5 hours for 1 epoch. Because the FIM baseline goes through an epoch more quickly than the Belief State transformer, we allow it to run for multiple epochs with a max training time of 5 hours, and use early stopping on the validation loss to select the best checkpoint. 
 
\begin{algorithm}[ht]
\caption{Beam Search}\label{alg:beam_search}
    \begin{algorithmic}[1]
    \REQUIRE text $x_{1:t}$, goal $x_{t+k:T}$,  number of steps $K$, beams $n$
 
    \STATE {Priority Queue $Q_0 \gets (1, \emptyset)$}
    \FOR{$k = 1$ to $K-1$}
        \FOR{$j = 1$ to $n$}
        \IF{$|Q_{k-1}| > 0$}
            \STATE {(priority $r$, sequence $u$) $\gets$ pop($Q_{k-1}$)}
            \FOR{Possible $\tilde{x}$}
                \STATE $Q_k \gets (r \texthead_n(\tilde{x} ~|~ \FE(x_{t+k:T}+ x_{1:t} + u)), u + \tilde{x})$
            \ENDFOR
        \ENDIF
        \ENDFOR
    \ENDFOR
    \STATE {\bf Output:} top($n,Q_K$)
    \end{algorithmic}
\end{algorithm}

The FIM model uses beam search \citep{graves2012sequence} to find high probability sequences at inference time. At a high level, beam search only examines and expands upon a fixed amount of paths during search to keep search tractable. To keep the inference time resources comparable, We use the same number of paths as trajectories we generate in \cref{alg:known_goal_decoding}, which is 120.

\subsection{Evaluation}
\label{supp:tinystories_evaluations}
We evaluate the models on 100 held out stories from the TinyStories dataset. For each story, we use the first 50 tokens for the prompt, and last 100 tokens for the suffix. The choice of 100 for the suffix length is due to the varying length of the stories - shorter stories would often just receive padding tokens as the suffix if the suffix window wasn't long enough. 

We use GPT4 to judge the quality of generated text. Given two generated stories, one from each model, we ask GPT4 to analyze the stories and output a recommendation on which story is superior. The stories are anonymized and shuffled to prevent information leakage about its author before we feed it to GPT4. For every evaluation, we shuffle the story ordering once, and repeat the the evaluation 3 times for a total of 6 GPT4 evaluations for every story. Since we evaluate across 100 held out stories, this amounts to 600 total queries to GPT4. We use the OpenAI API to automate the process.

Here is an example input we feed into GPT4.
{\scriptsize
\begin{quote}
\textbf{System Prompt:} \textit{
 You are evaluating two story generation models. First,  analyze the outputs and note their properties.  Then, make an argument for why one is better than another, or say that both are roughly equal. 
}

Story 1:

 Once upon a time, in a warm and sunny place, there was a big pit. A little boy named Tom liked to play near the pit. One day, Tom lost his red ball. He was very sad.
Tom asked his friend, Sam, "Did you see my red ball?" Sam said, "No, but let's look together." They looked and looked, but they could not find the red ball. Tom was very sad.
Then, a big bird came and dropped the red ball. The bird had the red ball in its beak. Tom and Sam were so happy! They said, "Thank you, bird!" The bird flew away, and Tom played with his red ball all day.

Story 2:

 Once upon a time, in a warm and sunny place, there was a big pit. A little boy named Tom liked to play near the pit. One day, Tom lost his red ball. He was very sad.
Tom asked his friend, Sam, for help. "Sam, can you help me find my red ball?" Sam said, "Yes, I will help you." They looked and looked, but they could not find the red ball.
Just when they were about to give up, a big bird flew down from the sky. The bird had the red ball in its beak! The bird dropped the ball into the pit. Tom and Sam were so happy. They thanked the bird and played with the red ball
\end{quote}
}

Then, GPT4 responds with:

{\scriptsize
\begin{quote}

\textbf{Analysis of Outputs:}

 Story 1:

**Grammar and Style:**
- Simple and clear sentences, with correct grammar throughout.
- Repetitive usage of "Tom was very sad".
- "Tom and Sam were so happy!" creates a clear and optimistic conclusion.

**Narrative Flow:**
- Introduction: Sets the scene with Tom and the pit.
- Conflict: Tom loses his red ball.
- Resolution: A big bird returns the ball.
- Conclusion: Ends with Tom playing all day, providing a satisfying closure.

Story 2:

**Grammar and Style:**
- Generally grammatically correct.
- Uses a more varied sentence structure.
- Dialogue includes direct speech ("Sam, can you help me find my red ball?"), adding a personal touch.

**Narrative Flow:**
- Introduction: Similar setup with Tom and the pit.
- Conflict: Tom loses his red ball, seeks Sam's help.
- Resolution: Big bird returns the ball, but drops it into the pit.
- Conclusion: The story seems to end abruptly without further explanation about retrieving the ball from the pit.

Argument for Which Story is Better:

**Conclusion Analysis:**
- **Story 1** concludes clearly with Tom playing all day, which establishes a definite and satisfying ending for the reader.
- **Story 2** leaves a gap in logic---though the bird returns the ball, it drops it into the pit, and we don't see how Tom and Sam retrieve it. The absence of this information feels like an incomplete ending.

**Grammar and Narrative:**
- Both stories are grammatically correct and maintain simple language, making them suitable for young readers.
- Story 1 employs more repetition, which could be beneficial for reinforcement but potentially tedious for some readers. Story 2 introduces more varied dialogue which might be more engaging.

**Overall Conclusion:**
- Given that Story 1 has a logical setup, development, and conclusion, **Story 1** is the better option. It provides a clear, satisfying end where Tom is happy and continues to play. Story 2, while containing more engaging dialogue, fails to wrap up the narrative logically, leaving the reader with lingering questions about the outcome.

\end{quote}
}

As seen in the example above, GPT4 is able to consider multiple factors about the story, from low level grammar and vocabulary choices, to higher level concepts like cohesiveness and logical flow, and uses these factors to make its final recommendation. 

\subsection{Model Outputs}
\label{supp:tinystories_examples}

Here are additional stories generated by the Belief State transformer in the goal-conditioned setting. 

{\scriptsize
\begin{quote}

\textbf{Story 1:} You don't have to be scared of the loud dog, I'll protect you". The mole felt so safe with the little girl. She was very kind and the mole soon came to trust her. He was so gentle and kind.
The little girl was so happy to have a new friend. She gave the mole a big hug and they played together all day. They ran around the garden and laughed and had lots of fun.
When it was time to go home, the little girl hugged the mole and said goodbye. The mole was so happy to have a friend like her and they promised to always be friends.
 
\textbf{Story 2:} Once upon a time, in a warm and sunny place, there was a big pit. A little boy named Tom liked to play near the pit. One day, Tom lost his red ball. He was very sad.
Tom asked his friend, Sam, "Did you see my red ball?" Sam said, "No, but let's look together." They looked near the pit and found the red ball. Tom was very happy. He said, "Thank you, Sam!"
But then, a big dog came and took the red ball. Tom and Sam were scared. They wanted the ball back. They ran after the dog. The dog ran into the pit and disappeared. Tom and Sam were sad.
They went into the pit to search. It was dark and scary. They could not find the ball. They tried to get out, but the pit was too deep. Tom and Sam were stuck in the pit. They called for help, but no one could hear them. They were sad and scared, and they never got out of the pit.
 
\textbf{Story 3:} Once upon a time there was a little girl named Lucy. She loved to go to the store to buy sweets with her mom and dad. On this special day, Lucy entered the store with her mom.
Lucy saw a big, red ball. She asked her mom, "Can I have the ball, please?" Her mom said, "No, Lucy. We don't have enough money today. We have to go home and get a new one."
Lucy was sad, but she understood. She asked her mom, "Can we get a new ball?" Her mom smiled and said, "Yes, we can get a new one. But first, we have to be peaceful and kind to her. Have patience and understanding. Together, you can both be happy!"
So, Lucy smiled at the girl and said, "Can we play together?" The little girl softened and smiled back. She agreed to share the toy and even let Lucy have a turn first.
Lucy and the little girl played together happily.

\end{quote}
}

And some outputs for the FIM baseline, over the same prefix / suffix pairs from above.
{\scriptsize
\begin{quote}
\textbf{Story 1:} You don't have to be scared of the loud dog, I'll protect you". The mole felt so safe with the little girl. She was very kind and the mole soon came to trust her. He said, "Don't be scared, I'm here to protect you". The little girl was so happy and thanked the mole for keeping her safe. From then on, the little girl and the mole were the best of friends.

\textbf{Story 2:} Once upon a time, in a warm and sunny place, there was a big pit. A little boy named Tom liked to play near the pit. One day, Tom lost his red ball. He was very sad.
Tom asked his friend, Sam, for help. "Sam, can you help me find my red ball?" Tom said. Sam said, "Yes, I will help you." They looked and looked, but they could not find the red ball. They were very sad.
Then, a big bird flew down from the sky. The bird had the red ball in its beak! The bird dropped the ball into the pit. Tom and Sam were so happy. They said, "Thank you, bird!"They went into the pit to search. It was dark and scary. They could not find the ball. They tried to get out, but the pit was too deep. Tom and Sam were stuck in the pit. They called for help, but no one could hear them. They were sad and scared, and they never got out of the pit.

\textbf{Story 3:} Once upon a time there was a little girl named Lucy. She loved to go to the store to buy sweets with her mom and dad. On this special day, Lucy entered the store with her mom and dad.
When they arrived at the store, Lucy was so excited to see all the different kinds of sweets. She couldn't believe her eyes when she saw all the different kinds of sweets. She couldn't believe her eyes when she saw all the different kinds of sweets.
Lucy's mom smiled at her and said, " to be peaceful and kind to her. Have patience and understanding. Together, you can both be happy!"
So, Lucy smiled at the girl and said, "Can we play together?" The little girl softened and smiled back. She agreed to share the toy and even let Lucy have a turn first.
Lucy and the little girl played together happily. 
\end{quote}
}

\section{Training the Belief State Transformer}
\label{app:pseudocode}

\begin{figure}[h]
    \centering
    \includegraphics[width=0.3\textwidth]{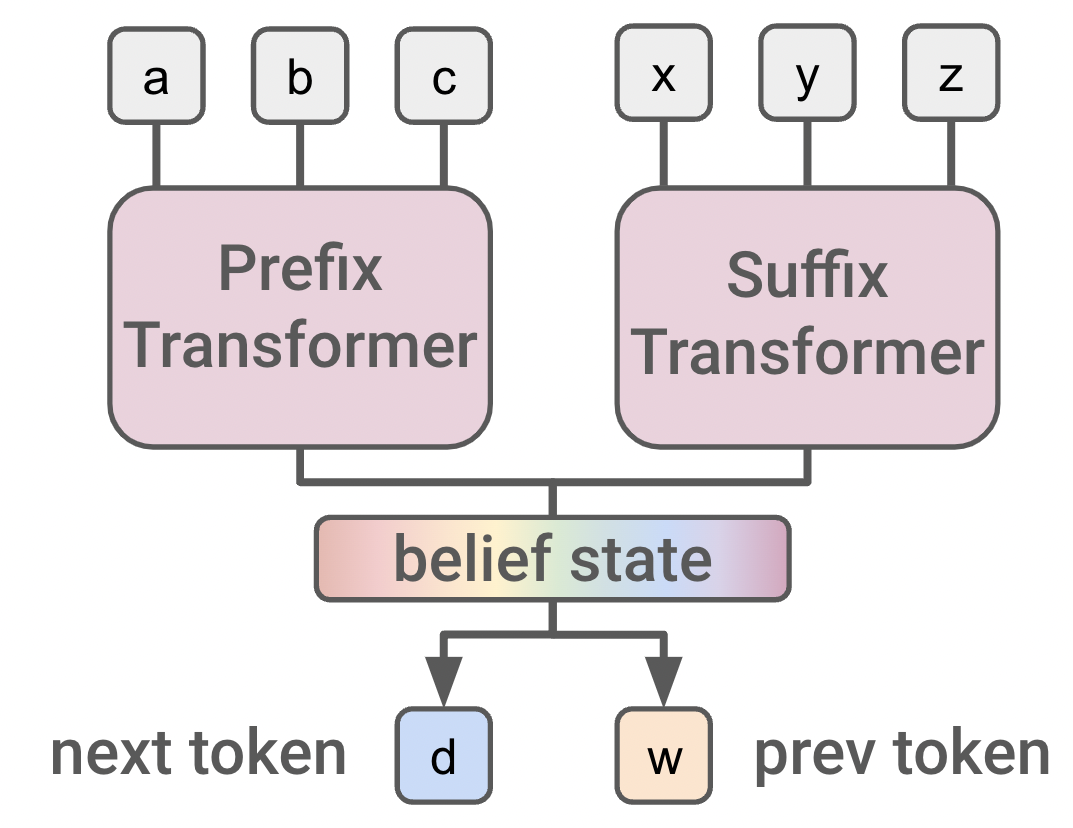}
    \caption{BST input and outputs.}
    \label{fig:bst_in_out}
\end{figure}

The belief state transformer, at a high level, takes in a prefix $x_{1:i}$, a suffix $x_{i+k:T}$, and predicts the token right after the prefix, and the token right before the suffix. See \autoref{fig:bst_in_out} for an example.

To create training data from a sequence $x_{1:T}$, all valid prefix-suffix pairs are created from the sequence. We define a valid prefix-suffix pair as follows.

Let the prefix be $x_{1:i}$. Then a suffix $x_{i+k:T}$ is valid for all $k \geq 2$. This condition ensures that there is at least one token that separates the prefix and suffix, which makes it feasible to predict tokens in between the prefix and suffix.

For a sequence $n$ tokens long, there are $O(T^2)$ possible valid prefix-suffix pairs. See \autoref{fig:bst_pairs} for example training inputs and their targets. 
\begin{figure}[h]
    \centering
    \includegraphics[width=0.95\textwidth]{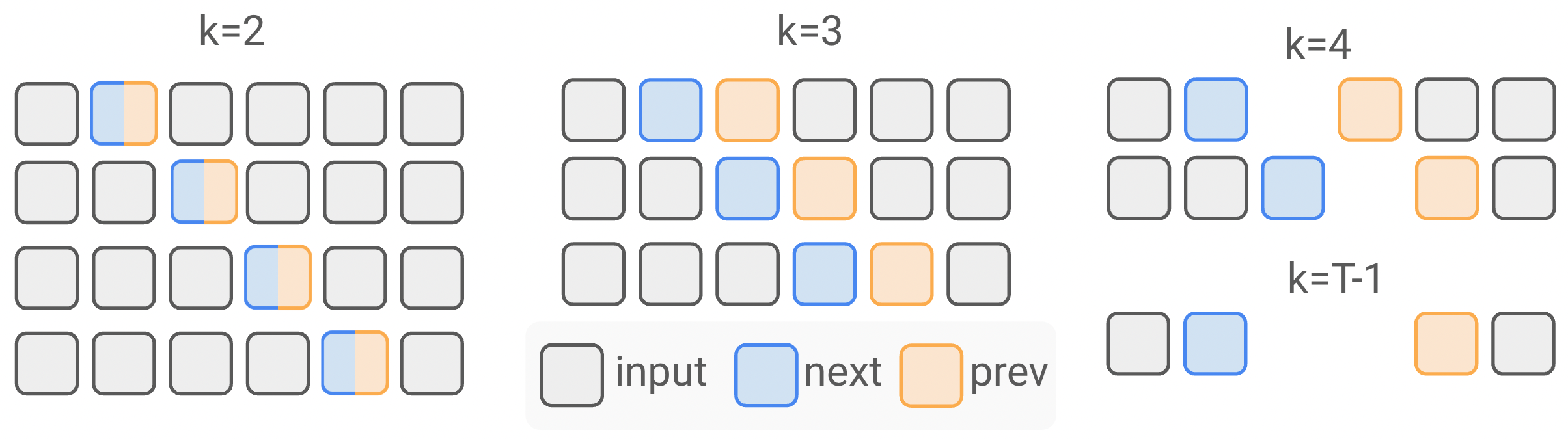}
    \caption{Forming training examples for the BST.}
    \label{fig:bst_pairs}
\end{figure}

We present pytorch pseudocode illustrating a simplified implementation of the belief state transformer (\autoref{fig:belief_state_loss_code}).  Additionally, we present a slightly more complex version which computes and accumulates the gradients for the text head MLP before backpropagating gradients into the encoders (\autoref{fig:partial_backprop_code}).  This implementation is more computationally efficient as it avoids backpropagating through the large transformer encoders multiple times. \cite{gloeckle2024better} adopts a similar strategy to efficiently compute gradients of an encoder with multiple heads.   

 We define the training overheads. Let the sequence length be $n$. To scale with batch size, one would multiply the batch size with the following equations. Let $h$ be the cost related to the output head size, and $e$ be the cost of soft attention.

GPT: 
$O(h T+e T^2)$, first term is cost of doing $\mathtt{text\_head}(f_i)$ for $f_{1:T}$, second term is cost of encoding the tokens $x_{1:T}$ with attention to produce $f_{1:T}$.

BST:
$O(h T^2 + e T^2)$, first term is cost of doing $\mathtt{text\_head}(f_i,b_j)$ for all valid pairs in the $O(T^2)$ prefix-suffix pairs, second term is cost of encoding the prefix and suffix tokens with attention.
Note that at inference, BST is $O(h T + e T^2)$.

\begin{figure}[ht]
    \centering
\begin{python}
import torch
import torch.nn as nn

def belief_state_objective(enc_F, enc_B, text_head, x):
  bs, T = x.shape
  forward_state = enc_F(x)
  backward_state = enc_B(x.flip(1)).flip(1)
  ft = torch.arange(T, dtype=torch.int32)
  bt = torch.arange(T, dtype=torch.int32)
  combinations = torch.cartesian_prod(ft, bt)
  combinations = combinations[(combinations[:, 1]-combinations[:, 0]>=2)]
  fb_pairs = combinations.clone()
  fb_pairs = fb_pairs[combinations[:,1] < T]
  f_idxs = fb_pairs[:, 0]
  b_idxs = fb_pairs[:, 1]
  nt_idxs = (combinations[:, 0] + 1)
  f = forward_state[:, f_idxs]
  b = backward_state[:, b_idxs]
  single_labels_f = x[:, nt_idxs].unsqueeze(2)
  single_labels_b = x[:, b_idxs].unsqueeze(2)
  single_labels = torch.cat((single_labels_f, single_labels_b), dim=2)
  logits = text_head(torch.cat([f, b],dim=2))
  fb_numpairs = fb_pairs.shape[0]
  logits = logits.reshape((bs, fb_numpairs, 2, -1))
  logits = logits.reshape((bs*fb_numpairs*2, -1))
  single_labels = single_labels.reshape((bs*fb_numpairs*2))
  loss = nn.CrossEntropyLoss()(logits, single_labels)
  return loss

if __name__ == '__main__':
  batch_size = 8
  T = 12
  m = 512
  num_tokens = 100
  #Use dummy function in place of actual autoregressive transformer
  enc_F = nn.Sequential(nn.Embedding(num_tokens, m), nn.Linear(m, m))
  enc_B = nn.Sequential(nn.Embedding(num_tokens, m), nn.Linear(m, m))
  text_head = nn.Sequential(nn.Linear(m*2, m), nn.LeakyReLU(), nn.Linear(m, num_tokens*2))
  x = torch.randint(0, num_tokens, size=(batch_size,T))

  loss = belief_state_objective(enc_F, enc_B, text_head, x)
  print(loss)
\end{python}
\caption{A simple implementation of the belief state transformer objective}
\label{fig:belief_state_loss_code}
\end{figure}

\begin{figure}[ht]
    \centering
\begin{python}
import torch
import torch.nn as nn

def belief_state_objective(all_f, all_b, text_head, x):
  bs, T = x.shape
  forward_state = all_f
  backward_state = all_b.flip(1)
  ft = torch.arange(T, dtype=torch.int32)
  bt = torch.arange(T, dtype=torch.int32)
  combinations = torch.cartesian_prod(ft, bt)
  combinations = combinations[(combinations[:, 1]-combinations[:, 0]>=2)]
  fb_pairs = combinations.clone()
  fb_pairs = fb_pairs[combinations[:,1] < T]
  f_idxs = fb_pairs[:, 0]
  b_idxs = fb_pairs[:, 1]
  nt_idxs = (combinations[:, 0] + 1)
  f = forward_state[:, f_idxs]
  b = backward_state[:, b_idxs]
  single_labels_f = x[:, nt_idxs].unsqueeze(2)
  single_labels_b = x[:, b_idxs].unsqueeze(2)
  single_labels = torch.cat((single_labels_f, single_labels_b), dim=2)
  logits = text_head(torch.cat([f, b],dim=2))
  fb_numpairs = fb_pairs.shape[0]
  logits = logits.reshape((bs, fb_numpairs, 2, -1))
  logits = logits.reshape((bs*fb_numpairs*2, -1))
  single_labels = single_labels.reshape((bs*fb_numpairs*2))
  loss = nn.CrossEntropyLoss()(logits, single_labels)
  return loss

if __name__ == '__main__':
  batch_size = 8
  T = 12
  m = 512
  num_tokens = 100
  #Use dummy function in place of actual autoregressive transformer
  enc_F = nn.Sequential(nn.Embedding(num_tokens, m), nn.Linear(m, m))
  enc_B = nn.Sequential(nn.Embedding(num_tokens, m), nn.Linear(m, m))
  text_head = nn.Sequential(nn.Linear(m*2, m), nn.LeakyReLU(), nn.Linear(m, num_tokens*2))
  x = torch.randint(0, num_tokens, size=(batch_size,T))
  f = enc_F(x)
  b = enc_B(x)
  
  # just get the text head computation graph
  _f = f.detach()
  _b = b.detach()
  _f.requires_grad = True
  _b.requires_grad = True 
  
  loss = belief_state_objective(_f, _b, text_head, x)
  # compute text head gradients over all prefix/suffix pairs.
  loss.backward()
  # Update encoders with 1 backward pass.
  f.backward(_f.grad)
  b.backward(_b.grad)
  
\end{python}
\caption{Efficient computation of all prefix-suffix losses.}
\label{fig:partial_backprop_code}
\end{figure}

\end{document}